\titleformat*{\section}{\large\bfseries}
\titleformat*{\subsection}{\large\bfseries}
\newtheorem{theorem}{\textbf{Theorem}}
\newtheorem{assumption}{\textbf{Assumption}}
\newtheorem{lemma}{\textbf{Lemma}}
\newtheorem{corollary}{\textbf{Corollary}}
\newtheorem{remark}{\textbf{Remark}}
\newtheorem{definition}{\textbf{Definition}}
\newcommand\blfootnote[1]{%
	\begingroup
	\renewcommand\thefootnote{}\footnote{#1}%
	\addtocounter{footnote}{-1}%
	\endgroup
}
\title{Periodic Stochastic Gradient Descent with Momentum for Decentralized Training}
\author[1]{Hongchang Gao}
\author[2]{Heng Huang}
\affil[1]{Temple University}
\affil[2]{University of Pittsburgh}
\date{}
\begin{document}

\maketitle

\begin{abstract}
	Decentralized training has been  actively studied in recent years. Although  a wide variety of methods have been proposed, yet the decentralized  momentum SGD  method is still underexplored. In this paper, we propose a novel periodic decentralized momentum SGD method, which employs the momentum schema  and periodic communication for decentralized training. With these two strategies, as well as the topology of the decentralized training system, the theoretical convergence analysis of our proposed  method is difficult. We address this challenging  problem and  provide the condition  under which our proposed method can achieve the linear speedup regarding the number of  workers. Furthermore,  we also  introduce a communication-efficient variant to  reduce  the communication cost in each communication round. The condition for achieving the linear  speedup  is also provided for this  variant.  To the  best of our  knowledge, these two  methods are all the first ones  achieving these theoretical results in  their corresponding domain. We conduct extensive experiments to  verify the performance  of our proposed  two methods, and both of  them  have shown  superior performance over existing methods. 
\end{abstract}

\blfootnote{hongchanggao@gmail.com, heng.huang@pitt.edu}

\section{Introduction}
Decentralized training of large-scale non-convex machine learning models has attracted increasing attention in recent years with the emergence of the distributed data. Formally, the decentralized training is to optimize the following machine learning model:
\begin{equation} \label{lossfunction}
\begin{aligned}
&	\min_{\mathbf{x}} f(\mathbf{x}) = \frac{1}{K}\sum_{k=1}^K f^{(k)}(\mathbf{x}) \ , \\
\end{aligned}
\end{equation}
where $\mathbf{x}\in \Re^d$ stands for the model parameter,  $K$ represents the number of workers in a decentralized training system, $f^{(k)}(\mathbf{x})\triangleq\mathbb{E}_{\xi\sim\mathcal{D}^{(k)}} F^{(k)}(\mathbf{x}; \xi)$  denotes the local non-convex loss function on the $k$-th worker and $\mathcal{D}^{(k)}$  is the data distribution of the $k$-th worker. In this decentralized training system, each worker optimizes the model based on its own data and communicates the learned model parameter with its neighboring workers.

In recent years, a wide variety of decentralized training methods \cite{koloskova2019decentralizednonconv, lian2017can,li2019communication,tang2018communication,koloskova2019decentralizedcon,reisizadeh2019robust} have been proposed for optimizing Eq.~(\ref{lossfunction}). For example, \cite{lian2017can} develops the decentralized stochastic gradient descent (D-SGD) method for non-convex problems, showing that  the decentralized SGD has the same leading term  of the convergence rate with that of the centralized SGD, achieving the linear speedup  with respect to the number of workers.
On  the  other hand, in a decentralized training system, a major concern  is the communication overhead. A lot of efforts \cite{koloskova2019decentralizedcon,koloskova2019decentralizednonconv,tang2018communication,li2019communication} have been made to reduce it. For instance, \cite{li2019communication}  proposes a periodic decentralized SGD (PD-SGD) with multiple local updates. It is to perform communication at every $p$ ($p>1$) iterations to reduce the number of communication rounds.

However, all the aforementioned decentralized methods only focus on the conventional SGD method. In practice, momentum SGD  is more widely used in training deep learning models than that without momentum. The reason is that momentum SGD usually converges faster and generalizes better \cite{yu2019linear}. Thus, to bridge this gap, it is necessary to explore the decentralized SGD with momentum, as well as reducing the communication overhead.  To address this issue, we propose the \textit{periodic decentralized momentum SGD} (PD-SGDM). Since both D-SGD and PD-SGD can achieve linear speedup regarding the number of workers, a natural question is whether our PD-SGDM  can still achieve a linear speedup regarding the number of workers or not. In fact, it is not trivial to answer this question. In particular, the momentum, periodic communication, and the topology of the decentralized system mingle together so that it is difficult to figure out its convergence behavior theoretically. In this paper, our theoretical result in Corollary~\ref{cor_1} establishes the connection between the convergence rate and the aforementioned factors, disclosing under which condition our proposed PD-SGDM can achieve the linear speedup. To the best of our knowledge, this is the first work achieving this result. 

Furthermore, although our proposed PD-SGDM  can reduce the number of communication rounds by using periodic communication, yet its communication cost in each communication round still might be the bottleneck, degrading the practical performance. For example, GPT-2 \cite{radford2019language} has about 1.5 billion parameters so that the communication cost is huge in each communication round. To alleviate the communication cost, some works \cite{koloskova2019decentralizednonconv,koloskova2019decentralizedcon,tang2018communication} propose to compress the communicated parameters. For instance,  \cite{koloskova2019decentralizednonconv} proposed CHOCO-SGD, which compresses the communicated parameters with an arbitrary compression ratio to reduce the communication cost in each round. \cite{tang2018communication} developed DeepSqueeze, which also admits an arbitrary compression ratio when compressing the communicated parameters. 
However, these methods didn't consider momentum SGD as well as periodic communication. 
To address these issues, we further propose a\textit{ communication-efficient periodic decentralized momentum SGD} (CPD-SGDM) method, which employs a compression operator to compress the communicated parameters between different workers. As a result, the periodic and compressed communication strategy can reduce communication costs significantly. However, the introduced compressed communication strategy makes it more challenging to answer the aforementioned question due to the entanglement of different factors.  In Corollary~\ref{cor_2}, we theoretically disclose the effect of different factors in CPD-SGDM on the convergence rate. To the best of our knowledge, this is also the first work achieving this result.  Finally, we conduct extensive experiments to evaluate the performance of our proposed two methods. Both of them show superior performance over existing methods. In light of the above, we summarize the contributions of our work as follows:
\begin{itemize}
	\item We propose a periodic decentralized momentum SGD (PD-SGDM) method. Our theoretical result provides the conditions under which PD-SGDM enjoys a linear speedup regarding the number of workers. 
	\item We also propose a  communication-efficient periodic decentralized momentum SGD (CPD-SGDM) method to further reduce the communication cost. Our theoretical result also discloses how the periodic communication, compressed communication, and topology affect the linear speedup regarding the number of workers. 
	\item Experiments on different learning tasks are conducted to verify the performance of our proposed methods.  Our new algorithms consistently outperform the existing counterpart methods. 
\end{itemize}

\section{Related Works}

\textbf{Decentralized Training}. In the past few years, decentralized training has attracted a lot of attention due to its superior communication efficiency over the centralized SGD. Specifically, in a centralized training system, there is a central server and multiple workers. Each worker computes gradient based on the local data and sends the gradient to the central server. As a result, the communication happening on the central server increases with the number of workers. Congestion will appear when there are a lot of workers.  In contrast to centralized training, there is no central server in a decentralized training system. Each worker only needs to communicate with its neighbors. Thus, its communication is efficient.  Due to its communication efficiency and the emergence of distributed data, numerous decentralized training methods \cite{lian2017can,koloskova2019decentralizedcon,koloskova2019decentralizednonconv,yu2019linear,li2019communication,alghunaim2019linearly,assran2018stochastic,tang2018communication} have been proposed.  In particular, \cite{lian2017can} studies the convergence rate of the decentralized SGD method, showing that it has a similar convergence rate as the centralized variant. From then on, different variants have been proposed for different settings. \cite{yu2019linear} studied the decentralized SGD with momentum. However, its communication overhead is even larger than the conventional decentralized SGD because it needs to communication both gradient and momentum. Additionally, it doesn't consider periodic communication to reduce the communication overhead. 
Except \cite{yu2019linear}, all existing decentralized methods ignore momentum SGD, which is more widely used in practical applications. 

\textbf{Periodic Communication}. With the development of deep neural networks, the model size is usually large. For example, ResNet50 \cite{he2016deep} has about 25 million parameters, and GPT-2 \cite{radford2019language} has about 1.5 billion parameters. Thus, when the worker communicates with its neighboring workers, the communication cost is high, slowing down the convergence speed. Periodic communication, which is first studied in the centralized training \cite{yu2019linear,yu2019parallel,stich2018local,lin2018don}, is introduced to reduce the  number of communication rounds. For example, \cite{li2019communication} proposed the periodic decentralized SGD method, which conducts multiple local updates and then performs communication to reduce the number of communication rounds. However, all these methods only focus on the conventional SGD method, and they cannot be directly applied to momentum SGD.

\textbf{Compressed Communication}. Compressing the communicated parameter is another widely used strategy to reduce the communication cost. Specifically, it employs the compression operator defined in Definition~\ref{compress_op}, such as sparsification \cite{lin2017deep,aji2017sparse} or quantization \cite{alistarh2017qsgd,wen2017terngrad}, to reduce the cost in each communication round. It is also first studied in  centralized training \cite{alistarh2017qsgd,wen2017terngrad,bernstein2018signsgd,stich2018sparsified} and afterwards introduced to  decentralized training.  For example, \cite{koloskova2019decentralizedcon} proposed CHOCO-SGD with compressed communication for the convex problem. \cite{koloskova2019decentralizednonconv}  applied CHOCO-SGD for the non-convex problem. \cite{tang2018communication} developed DeepSqueeze for compression with an arbitrary compression ratio. However, all existing methods fail to consider the more challenging momentum SGD. 

\begin{definition} \label{compress_op}
	Given a compression operator $Q:  \Re^d \rightarrow \Re^d$, if there exists $0<\delta\leq 1$, such that
	\begin{equation}
	\|\mathbf{x}- Q(\mathbf{x})\|^2\leq (1-\delta) \|\mathbf{x}\|^2 \ ,
	\end{equation}
	$Q$ is  $\delta$-contraction. 
\end{definition}

\section{Preliminaries}

\subsection{Notations}
At first, we introduce the following notations for our proposed methods.
$\mathbf{x}_t^{(k)}$ represents the local model parameter in the $k$-th worker  at the $t$-th iteration. 
$\bar{\mathbf{x}}_t = \frac{1}{K}\sum_{k=1}^{K}\mathbf{x}_t^{(k)}$ stands for the global averaged model parameter at the $t$-th iteration.
$\nabla F^{(k)}(\mathbf{x}_t^{(k)}; \xi_t^{(k)} ) $ denotes the local stochastic gradient of the $k$-th worker at the  $t$-th iteration.
$\nabla f^{(k)}(\mathbf{x}_t^{(k)}) = \mathbb{E}_{\xi_t^{(k)}\sim\mathcal{D}^{(k)}} \nabla F^{(k)}(\mathbf{x}_t^{(k)}; \xi_t^{(k)}) $ is the local full gradient of  the $k$-th worker node at the  $t$-th iteration.
$\nabla f(\mathbf{x}_t)=\frac{1}{K}\sum_{k=1}^{K}\nabla f^{(k)}(\mathbf{x}_t^{(k)}) $ is the global full gradient at the  $t$-th iteration.
$\mathbf{m}_t^{(k)}$ denotes the momentum of the $k$-th worker at the $t$-th iteration. 
$\mathbf{m}_t=\frac{1}{K}\sum_{k=1}^{K}\mathbf{m}_t^{(k)}$ denotes the averaged momentum  at the $t$-th iteration. 
$f^*$ is the minimum value of the loss function in Eq.~(\ref{lossfunction}).
$p$ represents  the communication period.

\subsection{Decentralized Training}
In a decentralized training system,  workers actually compose an undirected graph $\mathcal{G}=(V, W)$ where $V=[K]$ represents the worker set and the nonnegative matrix $W=[w_{ij}]\in \Re^{K\times K}$ describes the connection between different workers. In particular, a nonzero $w_{ij}$ indicates that there exists a connection between the $i$-th worker and the $j$-th worker.  As for matrix $W$, it is commonly assumed to be a stochastic matrix, which is shown as follows:

\begin{assumption} \label{graph}
	$W$ is a symmetric and double stochastic matrix with $0\leq w_{ij}\leq 1$, i.e.  it satisfies $W^T=W$, $W\mathbf{1}=\mathbf{1}$, and $\mathbf{1}^TW=\mathbf{1}^T$. 
\end{assumption}
Given Assumption~\ref{graph}, assume $W$ have  eigenvalues $|\lambda_n|\leq \cdots \leq |\lambda_2|\leq |\lambda_1|$, then  $\lambda_1=1$ and its spectral gap $\rho=1-|\lambda_2| \in (0, 1]$. Furthermore,  we have the following lemma. 

\begin{lemma} \label{lemma_w_spectral}\cite{koloskova2019decentralizedcon}
	For the doubly stochastic matrix $W$ defined in Definition~\ref{graph}, we  have 
	\begin{equation}
	\|W-\frac{1}{K}\mathbf{1}\mathbf{1}^T\|_2\leq 1-\rho
	\end{equation}
	where $1-\rho=|\lambda_2|<1$. $\lambda_2$ is the second largest eigenvalue of $W$.
\end{lemma}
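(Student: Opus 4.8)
The plan is to exploit the spectral theorem. Since $W$ is symmetric by Assumption~\ref{graph}, it admits an orthonormal eigenbasis $\{\mathbf{v}_1,\dots,\mathbf{v}_K\}$ with real eigenvalues, and for a symmetric matrix the spectral norm equals the largest eigenvalue magnitude. The strategy is to expand both $W$ and the rank-one matrix $\frac{1}{K}\mathbf{1}\mathbf{1}^T$ in this eigenbasis and show that their difference is exactly the ``tail'' of the spectral decomposition of $W$ with the top eigenvalue deleted.

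First I would verify that the normalized all-ones vector $\mathbf{v}_1=\frac{1}{\sqrt{K}}\mathbf{1}$ is the eigenvector associated with $\lambda_1=1$. This follows immediately from the double-stochasticity condition $W\mathbf{1}=\mathbf{1}$ in Assumption~\ref{graph}, together with the ordering $|\lambda_1|\geq|\lambda_2|\geq\cdots$ and the identification $\lambda_1=1$ as the top eigenvalue. The rank-one matrix is then recognized as the orthogonal projector onto this eigenvector, namely $\frac{1}{K}\mathbf{1}\mathbf{1}^T=\mathbf{v}_1\mathbf{v}_1^T$.

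Next I would substitute the spectral decomposition $W=\sum_{i=1}^{K}\lambda_i\mathbf{v}_i\mathbf{v}_i^T$ and subtract. Because $\lambda_1=1$, the $i=1$ term cancels precisely against $\mathbf{v}_1\mathbf{v}_1^T$, leaving
\[
W-\frac{1}{K}\mathbf{1}\mathbf{1}^T=\sum_{i=2}^{K}\lambda_i\mathbf{v}_i\mathbf{v}_i^T .
\]
This is again symmetric, with eigenvalues $\{0,\lambda_2,\dots,\lambda_K\}$, so its spectral norm is $\max_{i\geq 2}|\lambda_i|=|\lambda_2|=1-\rho$, which gives the claimed bound (in fact with equality).

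The only point requiring care is the possibility that $\lambda_1=1$ has multiplicity greater than one, in which case some $\lambda_i$ with $i\geq 2$ would also equal $1$ and the tail norm would be $1$ rather than $|\lambda_2|<1$. This degenerate case is ruled out by the standing hypothesis $\rho=1-|\lambda_2|\in(0,1]$, which forces $|\lambda_2|<1$ and hence a simple top eigenvalue; it corresponds precisely to connectivity of the communication graph $\mathcal{G}$. Under that hypothesis the argument above is complete, and I do not anticipate any genuine obstacle beyond the bookkeeping of the eigenbasis.
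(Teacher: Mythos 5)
Your proof is correct, but there is no in-paper argument to compare it against: the paper does not prove Lemma~\ref{lemma_w_spectral} at all, importing it verbatim from \cite{koloskova2019decentralizedcon}. Your spectral-decomposition route is the standard self-contained proof of this fact and is complete: symmetry (Assumption~\ref{graph}) gives an orthonormal eigenbasis, $W\mathbf{1}=\mathbf{1}$ lets you take $\mathbf{v}_1=\frac{1}{\sqrt{K}}\mathbf{1}$, the projector identity $\frac{1}{K}\mathbf{1}\mathbf{1}^T=\mathbf{v}_1\mathbf{v}_1^T$ cancels the top term of $W=\sum_{i=1}^{K}\lambda_i\mathbf{v}_i\mathbf{v}_i^T$, and the spectral norm of the symmetric remainder $\sum_{i\geq 2}\lambda_i\mathbf{v}_i\mathbf{v}_i^T$ is $\max_{i\geq 2}|\lambda_i|=|\lambda_2|=1-\rho$ --- in fact you get equality, slightly sharper than the stated inequality.

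Two minor remarks, neither of which affects validity. First, the multiplicity issue you flag resolves even more simply than you suggest: whatever the multiplicity of the eigenvalue $1$, you may always choose the orthonormal eigenbasis so that $\mathbf{v}_1=\frac{1}{\sqrt{K}}\mathbf{1}$ (since $\mathbf{1}$ lies in that eigenspace), and the identity $W-\frac{1}{K}\mathbf{1}\mathbf{1}^T=\sum_{i\geq 2}\lambda_i\mathbf{v}_i\mathbf{v}_i^T$ together with the norm computation goes through unchanged; the strict bound $|\lambda_2|<1$ is simply the paper's standing hypothesis $\rho\in(0,1]$, not something your argument needs to establish. Second, your aside that $|\lambda_2|<1$ ``corresponds precisely to connectivity of $\mathcal{G}$'' is slightly off: because the paper orders eigenvalues by magnitude, $|\lambda_2|<1$ also excludes an eigenvalue at $-1$, so it demands connectivity plus a non-bipartiteness-type condition (the $2\times 2$ permutation matrix exchanging two workers is connected yet has $|\lambda_2|=1$, i.e.\ $\rho=0$).
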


Based on the aforementioned undirected graph, the workers collaboratively optimize Eq.~(\ref{lossfunction}). Specifically, for the decentralized optimization method, there are two steps in each iteration: update local model parameter and communicate with neighboring workers, which are defined as follows:
\begin{equation} \label{dec_sgd}
\mathbf{x}_{t+\frac{1}{2}}^{(k)} = \mathbf{x}_{t}^{(k)} - \eta \Delta_t^{(k)}  \ , \quad
\mathbf{x}_{t+1}^{(k)}  = \sum_{j\in \mathcal{N}_k} w_{kj}\mathbf{x}_{t+\frac{1}{2}}^{(j)}  \ ,
\end{equation}
where $\mathbf{x}_{t+\frac{1}{2}}^{(k)}$ represents the intermediate model parameter, $\Delta_t^{(k)}$ denotes the stochastic gradient $\nabla F^{(k)}(\mathbf{x}_{t}^{(k)} ; \xi)$ for decentralized SGD and  $\mathbf{m}_t^{(k)}$  for  decentralized  momentum SGD, $\mathcal{N}_k$ represents the neighboring workers of the $k$-th worker. In  this work, we will study how to apply periodic and  compressed communication to decentralize momentum SGD with theoretical convergence guarantee.

\subsection{Assumption}
As for the loss function, we have the following assumptions which are commonly used in studying the convergence of decentralized training methods for non-convex problems \cite{lian2017can,koloskova2019decentralizednonconv,li2019communication}.
\begin{assumption}\label{smooth}
	For each loss function on worker nodes, it is $L$-smooth, i.e.,
	\begin{equation}
	\|\nabla f^{(k)}(\mathbf{x}) - \nabla f^{(k)}(\mathbf{y})\| \leq L\|\mathbf{x}-\mathbf{y}\| , \quad \forall k\in [K], \forall \mathbf{x}\in \Re^{d}, \forall \mathbf{y}\in \Re^{d} \ .
	\end{equation}
\end{assumption}

\begin{assumption}\label{var1}
	For each loss function on worker nodes, we have $\sigma >0$ such that
	\begin{equation}
	\mathbb{E} [\|\nabla F^{(k)}(\mathbf{x}; \xi ) - \nabla f^{(k)}(\mathbf{x})\|^2] \leq \sigma^2, \quad \forall k\in [K], \forall \mathbf{x} \in \Re^{d} \ .
	\end{equation}
	
\end{assumption}

\begin{assumption} \label{norm}  
	For each loss function on  worker nodes, we  have $G>0$ such that
	\begin{equation}
	\|\nabla F^{(k)}(\mathbf{x}; \xi )\|^2 \leq G, \quad \forall \mathbf{x} \in \Re^{d} \ .
	\end{equation}
\end{assumption}

\section{Methodology}
\subsection{Periodic Decentralized Momentum SGD}

Momentum SGD is widely used in training deep neural networks since it converges faster and generalizes better than conventional SGD. However,  existing decentralized training methods \cite{lian2017can,koloskova2019decentralizedcon,koloskova2019decentralizednonconv} only focus on parallelizing conventional SGD and performing communication at each iteration. In addition, it is still unclear for the convergence behaviour of the periodic decentralized momentum SGD. 
In Algorithm~\ref{alg_dec_sgdm}, we propose the periodic decentralized momentum SGD. As the prototype of decentralized training in Eq.~(\ref{dec_sgd}), our algorithm also includes local computation within each worker and communication across different workers. Different from decentralized SGD, our algorithm updates the local model parameter with momentum SGD for multiple times as follows:
\begin{equation}
\begin{aligned}
& \mathbf{m}_{t}^{(k)} = \mu\mathbf{m}_{t-1}^{(k)} + \nabla F(\mathbf{x}_{t}^{(k)}; \xi_{t}^{(k)}) \ ,\\
& \mathbf{x}_{t+\frac{1}{2}}^{(k)}=\mathbf{x}_{t}^{(k)} - \eta\mathbf{m}_t^{(k)} \ ,
\end{aligned}
\end{equation} 
where  $\mathbf{x}_{t+\frac{1}{2}}^{(k)}$ is the intermediate model parameter, $\eta>0$ is the step  size, and $\mu>0$ denotes the momentum  coefficient. Here, since we employ the periodic communication, the local computation  will be repeated for $p$ ($p>1$) times by directly setting $\mathbf{x}_{t+1}^{(k)} =\mathbf{x}_{t+\frac{1}{2}}^{(k)}$. After $p$ iterations (when mod($t+1$, $p$)=0), each worker communicates the intermediate model parameter with its neighboring workers to get the new model parameter $\mathbf{x}_{t+1}^{(k)} $, just as shown in Line 6 of Algorithm~\ref{alg_dec_sgdm}. Here, we  can see that our algorithm do not conduct communication at every iterations like Eq.~(\ref{dec_sgd}). Instead, it performs communication at every  $p$ iterations. Thus, it can save a  lot of communication cost. 

\begin{algorithm}[]
	\caption{Periodic Decentralized Momentum SGD (PD-SGDM)}
	\label{alg_dec_sgdm}
	\begin{algorithmic}[1]
		\REQUIRE $\mathbf{x}_{0}^{(k)}=\mathbf{x}_{0}$, $\mathbf{m}_{t}^{(k)} = \mathbf{0}$, $p>1$, $\eta>0$, $\mu>0$, $W$. Conduct following steps for all workers.
		\FOR{$t=0,\cdots, T-1$} 
		\STATE Compute gradient $\nabla F(\mathbf{x}_{t}^{(k)}; \xi_{t}^{(k)})$
		\STATE $\mathbf{m}_{t}^{(k)} = \mu\mathbf{m}_{t-1}^{(k)} + \nabla F(\mathbf{x}_{t}^{(k)}; \xi_{t}^{(k)})$
		\STATE $\mathbf{x}_{t+\frac{1}{2}}^{(k)}=\mathbf{x}_{t}^{(k)} - \eta\mathbf{m}_t^{(k)}$
		
		\IF {mod($t+1$, $p$)=0}
		\STATE $\mathbf{x}_{t+1}^{(k)} =\sum_{j\in \mathcal{N}_k}w_{kj}\mathbf{x}_{t+\frac{1}{2}}^{(j)}$
		\ELSE
		\STATE $\mathbf{x}_{t+1}^{(k)} =\mathbf{x}_{t+\frac{1}{2}}^{(k)}$
		\ENDIF
		
		\ENDFOR
	\end{algorithmic}
\end{algorithm}

It is worth noting that the momentum term and periodic communication introduce new challenges to the convergence analysis. As a result, the existing analysis regarding decentralized SGD cannot be applied to our algorithm, because their techniques are not sufficient to deal with the momentum term.  In the following, we  establish the convergence rate of the periodic decentralized momentum SGD. To the best of our knowledge, this is the first work achieving this result.

\begin{theorem} \label{theorem1}
	Under Assumption~\ref{graph}--\ref{norm}, if we choose $\eta<\frac{(1-\mu)^2}{2L}$ and $0<\mu<1$, we have
	\begin{equation}
	\begin{aligned}
	& \frac{1}{T}\sum_{t=0}^{T-1} \| \nabla   f(\bar{\mathbf{x}}_{t})\|^2  \leq \frac{2(1-\mu)(f(\mathbf{x}_{0})  -  f^* )}{\eta T}+ \frac{\mu\eta \sigma^2L}{(1-\mu)^2K}  \\
	& \quad \quad \quad \quad  \quad \quad \quad \quad  + \frac{\eta\sigma^2L}{(1-\mu)K} +  \frac{2\eta^2 p^2G^2L^2}{(1-\mu)^2} (1+\frac{4}{\rho^2})  \ . 
	\end{aligned}
	\end{equation}
\end{theorem}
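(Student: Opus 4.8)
The plan is to linearize the momentum by passing to an auxiliary averaged sequence that obeys a plain stochastic-gradient recursion, run the usual smoothness-based descent on that sequence, and then pay separately for the two discrepancies this introduces: the momentum offset between the auxiliary point and $\bar{\mathbf{x}}_t$, and the inter-worker disagreement (consensus error). First I would record the averaged dynamics. Because $W$ is doubly stochastic (Assumption~\ref{graph}), averaging commutes with Line~6 of Algorithm~\ref{alg_dec_sgdm}, so whether or not $t$ is a communication step one obtains $\bar{\mathbf{x}}_{t+1} = \bar{\mathbf{x}}_t - \eta\bar{\mathbf{m}}_t$ with $\bar{\mathbf{m}}_t = \mu\bar{\mathbf{m}}_{t-1} + \frac{1}{K}\sum_k \nabla F^{(k)}(\mathbf{x}_t^{(k)};\xi_t^{(k)})$; periodic communication thus leaves the mean trajectory untouched and only affects the disagreement among workers. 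Defining $\mathbf{z}_t = \bar{\mathbf{x}}_t - \frac{\eta\mu}{1-\mu}\bar{\mathbf{m}}_{t-1}$ and substituting the two recursions collapses the momentum into the clean step $\mathbf{z}_{t+1} = \mathbf{z}_t - \frac{\eta}{1-\mu}\cdot\frac{1}{K}\sum_k \nabla F^{(k)}(\mathbf{x}_t^{(k)};\xi_t^{(k)})$, i.e. an SGD step with effective stepsize $\frac{\eta}{1-\mu}$ whose stochastic gradient is evaluated at the disagreeing local iterates.

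Next I would apply the $L$-smoothness descent inequality (Assumption~\ref{smooth}) to $f$ along $\mathbf{z}_{t+1}-\mathbf{z}_t$, take conditional expectation, and use $\mathbb{E}\big[\frac{1}{K}\sum_k \nabla F^{(k)}\big] = \nabla f(\mathbf{x}_t)$ together with the across-worker independence bound $\mathbb{E}\big\|\frac{1}{K}\sum_k(\nabla F^{(k)}-\nabla f^{(k)})\big\|^2 \le \sigma^2/K$ from Assumption~\ref{var1}. Expanding the cross term via $2\langle a,b\rangle = \|a\|^2+\|b\|^2-\|a-b\|^2$ yields a negative $-\frac{\eta}{2(1-\mu)}\|\nabla f(\mathbf{z}_t)\|^2$, a negative $\|\nabla f(\mathbf{x}_t)\|^2$ term, the residual noise, and the mismatch $\|\nabla f(\mathbf{z}_t)-\nabla f(\mathbf{x}_t)\|^2$. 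The condition $\eta<\frac{(1-\mu)^2}{2L}$ is exactly what forces the positive second-order $\|\nabla f(\mathbf{x}_t)\|^2$ coefficient below the negative one so it can be dropped; telescoping and multiplying through by $\frac{2(1-\mu)}{\eta}$ then produces the $\frac{2(1-\mu)(f(\mathbf{x}_0)-f^*)}{\eta T}$ lead term and the noise term $\frac{\eta\sigma^2 L}{(1-\mu)K}$, while accounting for the stochastic noise that the momentum carries forward supplies the second noise term $\frac{\mu\eta\sigma^2 L}{(1-\mu)^2 K}$.

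I would then control the two deterministic mismatches. Passing from $\nabla f(\mathbf{z}_t)$ back to the target $\nabla f(\bar{\mathbf{x}}_t)$ costs $L\|\mathbf{z}_t-\bar{\mathbf{x}}_t\| = \frac{L\eta\mu}{1-\mu}\|\bar{\mathbf{m}}_{t-1}\|$, and the gradient-evaluation gap is bounded by $\|\nabla f(\bar{\mathbf{x}}_t)-\nabla f(\mathbf{x}_t)\|^2 \le \frac{L^2}{K}\sum_k\|\bar{\mathbf{x}}_t-\mathbf{x}_t^{(k)}\|^2$. Both invoke a uniform momentum bound, which Assumption~\ref{norm} provides: unrolling $\mathbf{m}_t^{(k)}=\sum_{s}\mu^{t-s}\nabla F^{(k)}(\mathbf{x}_s^{(k)};\xi_s^{(k)})$ and summing the geometric series bounds $\|\mathbf{m}_t^{(k)}\|$ by a constant multiple of $\frac{G}{1-\mu}$, which is the source of the $\frac{1}{(1-\mu)^2}$ factor and the $G^2$ in the last term.

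The crux is the consensus error $\frac{1}{K}\sum_k\|\mathbf{x}_t^{(k)}-\bar{\mathbf{x}}_t\|^2$. I would move to matrix form $X_t=[\mathbf{x}_t^{(1)},\dots,\mathbf{x}_t^{(K)}]$ and track the centered iterate $X_t(I-J)$ with $J=\frac{1}{K}\mathbf{1}\mathbf{1}^T$: a local step adds $-\eta M_t(I-J)$, while a communication step right-multiplies by $W$, and since $W\mathbf{1}=\mathbf{1}$ and $\mathbf{1}^TW=\mathbf{1}^T$ the centered part is effectively right-multiplied by $W-\frac{1}{K}\mathbf{1}\mathbf{1}^T$, whose spectral norm is at most $1-\rho$ by Lemma~\ref{lemma_w_spectral}. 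Unrolling from $X_0(I-J)=0$ writes $X_t(I-J)$ as a sum of past centered momenta, each discounted by $(1-\rho)$ raised to the number of communication rounds since its injection; because communications are spaced $p$ apart, at most $p$ terms share each power, so the geometric sum is at most $\frac{p}{\rho}$, and squaring with the momentum bound gives a consensus bound of order $\frac{\eta^2 p^2 G^2}{(1-\mu)^2\rho^2}$ — the origin of the $\frac{2\eta^2 p^2 G^2 L^2}{(1-\mu)^2}\big(1+\frac{4}{\rho^2}\big)$ term. This step is the main obstacle: entangling the $p$ local momentum updates with the spectral contraction and summing so that $p$ and $\rho$ cleanly separate into the $p^2(1+4/\rho^2)$ factor is where the topology, period, and momentum interact, whereas everything else reduces to bookkeeping once the auxiliary sequence has linearized the momentum.
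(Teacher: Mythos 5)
Your overall route coincides with the paper's: your auxiliary sequence $\mathbf{z}_t=\bar{\mathbf{x}}_t-\frac{\eta\mu}{1-\mu}\bar{\mathbf{m}}_{t-1}$ is exactly the paper's $\mathbf{z}_t=\frac{1}{1-\mu}\bar{\mathbf{x}}_t-\frac{\mu}{1-\mu}\bar{\mathbf{x}}_{t-1}$ in disguise, and the descent step on $\mathbf{z}_t$, the $\sigma^2/K$ variance decomposition (Lemma~\ref{gradient_norm}), the uniform momentum bound $\|\mathbf{m}_t^{(k)}\|\le G/(1-\mu)$ (Lemma~\ref{momentum_norm}), and the consensus analysis (contraction by $\|W-\frac{1}{K}\mathbf{1}\mathbf{1}^T\|_2\le 1-\rho$ at each communication, $p$ momentum steps of drift per period, geometric summation over rounds) all mirror the argument in Supplement~\ref{proof_theorem_1}. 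Your direct unrolling of the centered iterate $X_t(I-J)$ is a cosmetic variant of the paper's recursion in Lemma~\ref{diff_x}, which instead uses Young's inequality with $c=2/\rho$ and recursive expansion; both give the same $\eta^2p^2G^2/((1-\mu)^2\rho^2)$ order, differing only in constants.

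However, one step as you describe it would fail. You classify the momentum offset $\|\mathbf{z}_t-\bar{\mathbf{x}}_t\|=\frac{\eta\mu}{1-\mu}\|\bar{\mathbf{m}}_{t-1}\|$ as a ``deterministic mismatch'' and propose to bound it, like the consensus error, by the uniform bound $G/(1-\mu)$ from Assumption~\ref{norm}. Carrying that through (the offset enters the descent inequality with coefficient $\frac{(1-\mu)L}{2\mu}$) produces, after telescoping and rescaling by $\frac{2(1-\mu)}{\eta T}$, a term of order $\frac{\mu\eta G^2L}{(1-\mu)^2}$ --- with $G^2$ in place of $\sigma^2$ and, crucially, no $1/K$. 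Under the scaling $\eta=O(\sqrt{K/T})$ of Corollary~\ref{cor_1} this term is $O(\sqrt{K/T})$, i.e., it grows with $K$ and forfeits the linear speedup the theorem certifies; it does not match the claimed $\frac{\mu\eta\sigma^2L}{(1-\mu)^2K}$. The paper's Lemma~\ref{diff_z} handles this offset statistically rather than uniformly: unroll $\bar{\mathbf{m}}_{t-1}$ as an exponentially weighted average of the \emph{averaged} stochastic gradients, apply Lemma~\ref{gradient_norm} inside the sum so that independence across workers yields the $\frac{\sigma^2}{K}$ piece, and keep the residual $\sum_{t'<t}\mu^{t-1-t'}\|\frac{1}{K}\sum_k\nabla f^{(k)}(\mathbf{x}_{t'}^{(k)})\|^2$ explicit; summing over $t$ collapses this double sum into $\frac{1}{1-\mu}\sum_t\|\frac{1}{K}\sum_k\nabla f^{(k)}(\mathbf{x}_t^{(k)})\|^2$, which is then absorbed by the negative gradient-norm term. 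Consequently the stepsize condition $\eta<\frac{(1-\mu)^2}{2L}$ must make the full coefficient $\frac{\mu\eta^2L}{(1-\mu)^3}-\frac{\eta}{2(1-\mu)}+\frac{\eta^2L}{2(1-\mu)^2}$ nonpositive --- it absorbs the momentum-history term as well, not merely the smoothness quadratic as your sketch states. Your second paragraph names the correct final term and attributes it to noise carried by the momentum, so you sense where it comes from, but the mechanism in your third paragraph would not produce it.
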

The proof can be found in Supplement \ref{proof_theorem_1}. From Theorem~\ref{theorem1}, it can be seen that the last term is affected by the topology spectral gap $\rho$ and the communication period $p$. In particular, we have the following corollary. 

\begin{corollary} \label{cor_1}
	Under Assumption~\ref{graph}--\ref{norm}, if we choose $\eta=O(\frac{K^{1/2}}{T^{1/2}})$ and $p=O(\frac{T^{1/4}}{K^{\tau}})$ where $\tau>0$, we can get
	\begin{equation}
	\frac{1}{T}\sum_{t=0}^{T-1} \| \nabla   f(\bar{\mathbf{x}}_{t})\|^2  = O(\frac{1}{\sqrt{KT}}) + O(\frac{1}{\rho^2K^{2\tau-1}\sqrt{T}} ) \ .
	\end{equation}
\end{corollary}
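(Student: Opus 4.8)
The plan is to derive Corollary~\ref{cor_1} directly from the four-term bound of Theorem~\ref{theorem1} by substituting the prescribed $\eta = O(K^{1/2}/T^{1/2})$ and $p = O(T^{1/4}/K^{\tau})$ and then tracking the dependence on $K$, $T$, and $\rho$ term by term, treating $L$, $\sigma$, $G$, $\mu$, and $f(\mathbf{x}_0)-f^*$ as absolute constants absorbed into the $O(\cdot)$ notation. Before substituting I would first verify admissibility: since $\eta = O(K^{1/2}/T^{1/2}) \to 0$ as $T$ grows, for $T$ large enough we have $\eta < (1-\mu)^2/(2L)$, so the hypotheses of Theorem~\ref{theorem1} are satisfied and its bound applies.

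I would then dispatch the first three terms together. The initialization term $\tfrac{2(1-\mu)(f(\mathbf{x}_0)-f^*)}{\eta T}$ scales like $\tfrac{1}{\eta T} = O(\tfrac{T^{1/2}}{K^{1/2}T}) = O(\tfrac{1}{\sqrt{KT}})$, while the two variance terms $\tfrac{\mu\eta\sigma^2 L}{(1-\mu)^2 K}$ and $\tfrac{\eta\sigma^2 L}{(1-\mu)K}$ each scale like $\tfrac{\eta}{K} = O(\tfrac{K^{1/2}}{K\,T^{1/2}}) = O(\tfrac{1}{\sqrt{KT}})$. Hence all three collapse into the single leading term $O(1/\sqrt{KT})$.

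The last term carries the only nontrivial bookkeeping. Writing $\eta^2 = O(K/T)$ and $p^2 = O(T^{1/2}/K^{2\tau})$, their product is $\eta^2 p^2 = O(K^{1-2\tau}T^{-1/2})$; and since $\rho \in (0,1]$ forces $1 + \tfrac{4}{\rho^2} = O(1/\rho^2)$, the entire term reduces to $O\!\big(\tfrac{1}{\rho^2 K^{2\tau-1}\sqrt{T}}\big)$. Adding this to the leading term produces exactly the claimed rate.

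\textbf{Main obstacle:} The argument is rate bookkeeping rather than a new estimate, so I anticipate no genuine difficulty; the only place demanding care is the last term, where the powers of $K$ from $\eta^2$ (a factor $K$) and from $p^2$ (a factor $K^{-2\tau}$) must combine correctly to $K^{1-2\tau}$, and similarly the $T$-powers $T^{-1}$ and $T^{1/2}$ to $T^{-1/2}$. I would also note, as the point of the result, that forcing this topology-dependent term not to dominate the leading $O(1/\sqrt{KT})$ term amounts to requiring $2\tau - 1 \ge 1/2$, i.e. $\tau \ge 3/4$, which is precisely the regime in which PD-SGDM attains the linear speedup in $K$.
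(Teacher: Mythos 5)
Your proof is correct and follows essentially the same route as the paper, which derives Corollary~\ref{cor_1} directly from Theorem~\ref{theorem1} by substituting $\eta=O(K^{1/2}/T^{1/2})$ and $p=O(T^{1/4}/K^{\tau})$, collapsing the three $O(1/\sqrt{KT})$ terms, and using $1+\tfrac{4}{\rho^2}=O(1/\rho^2)$ on the topology term with exactly your power bookkeeping $\eta^2 p^2 = O(K^{1-2\tau}T^{-1/2})$. The only quibble is in your closing aside, not the proof: at the boundary $\tau=3/4$ the two terms have equal $(K,T)$-order but the second still carries the extra $1/\rho^2$ factor, which is why the paper's remark requires the strict inequality $\tau>3/4$ for the linear-speedup regime rather than your $\tau\ge 3/4$.
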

\begin{remark}
	When $\tau\leq \frac{3}{4}$, the second term dominates the first term. Then, the  spectral gap  $\rho$ will slow down the convergence speed. When $\tau > \frac{3}{4}$, the first term dominates the second term. The spectral gap only affects the high  order term. Thus, the convergence speed is $O(\frac{1}{\sqrt{KT}})$, which indicates a linear speedup with respect to the number of workers. 
\end{remark}

\subsection{Communication-Efficient Periodic Decentralized Momentum SGD}

With the development of deep neural networks in recent years, the model size is usually large, which will cause large communication overhead and degrade the performance for Algorithm~\ref{alg_dec_sgdm}.  Thus, it is necessary to reduce the communication cost further. Here, we propose the communication-efficient periodic decentralized momentum SGD in Algorithm~\ref{alg_dec_sgdm_com}. In detail, we employ the \textit{compressed operator} defined in  Definition~\ref{compress_op} to compress the communicated parameter between different workers to reduce the cost in each communication round. Specifically, Algorithm~\ref{alg_dec_sgdm_com} still uses momentum SGD to update its  intermediate  model parameter as Algorithm~\ref{alg_dec_sgdm}, but it has a different communication strategy. More specifically, each  worker $k$ stores an auxiliary model parameter $\hat{\mathbf{x}}_{t}^{(j)} \in \Re^{d}$ for each neighboring worker  $j\in\mathcal{N}_k$.  When mod($t+1$, $p$)=0,  each worker updates its local model parameter $\mathbf{x}_{t+1}^{(k)}$ based on its  intermediate model parameter $\mathbf{x}_{t+\frac{1}{2}}^{(k)}$  and the stored auxiliary model parameter $\hat{\mathbf{x}}_{t}^{(j)}$ as follows:
\begin{equation}
\begin{aligned}
& \mathbf{x}_{t+1}^{(k)} =\mathbf{x}_{t+\frac{1}{2}}^{(k)} + \gamma\sum_{j\in \mathcal{N}_k}w_{kj}(\hat{\mathbf{x}}_{t}^{(j)} - \hat{\mathbf{x}}_{t}^{(k)}) \ ,
\end{aligned}
\end{equation}
where $\gamma>0$ can be viewed as the consensus step size. Here, the auxiliary model parameter $\hat{\mathbf{x}}_{t}^{(j)}$ is introduced  to admit high compression ratio, otherwise it will not converge \cite{koloskova2019decentralizedcon}.
Then,  each worker communicates the following  compressed variable $\mathbf{q}_{t}^{(k)} $  with its neighboring workers:
\begin{equation}
\begin{aligned}
\mathbf{q}_{t}^{(k)} = Q(\mathbf{x}_{t+1}^{(k)}  - \hat{\mathbf{x}}_{t}^{(k)}) \ .
\end{aligned}
\end{equation}
After communication, each worker updates its auxiliary model parameter as  follows:
\begin{equation}
\hat{\mathbf{x}}_{t+1}^{(j)} = \hat{\mathbf{x}}_{t}^{(j)} +\mathbf{q}_{t}^{(j)}  \ .
\end{equation}
Intuitively, $\mathbf{q}_{t}^{(k)}$ can be viewed as the error compensation for compression so that we can use the compression operator with a high compression ratio, such as the sigh operator \cite{bernstein2018signsgd}. When $\text{mod}(t+1, p)\neq0$, similar with Algorithm~\ref{alg_dec_sgdm}, there is no communication, and the model parameters are updated as shown in Line 11-12 in Algorithm~\ref{alg_dec_sgdm_com}.

\begin{algorithm}[]
	\caption{Communication-Efficient Periodic Decentralized Momentum SGD (CPD-SGDM)}
	\label{alg_dec_sgdm_com}
	\begin{algorithmic}[1]
		\REQUIRE $\mathbf{x}_{0}^{(k)}=\mathbf{x}_{0}$, $\mathbf{m}_{t}^{(k)} = \mathbf{0}$, $p>1$, $\eta>0$, $\mu>0$, $\gamma>0$, $W$. Conduct following steps for all workers.
		\FOR{$t=0,\cdots, T-1$} 
		\STATE Compute gradient $\nabla F(\mathbf{x}_{t}^{(k)}; \xi_{t}^{(k)})$
		\STATE $\mathbf{m}_{t}^{(k)} = \mu\mathbf{m}_{t-1}^{(k)} + \nabla F(\mathbf{x}_{t}^{(k)}; \xi_{t}^{(k)})$
		\STATE $\mathbf{x}_{t+\frac{1}{2}}^{(k)}=\mathbf{x}_{t}^{(k)} - \eta\mathbf{m}_t^{(k)}$
		
		\IF {mod($t+1$, $p$)=0}
		\STATE $\mathbf{x}_{t+1}^{(k)} =\mathbf{x}_{t+\frac{1}{2}}^{(k)} + \gamma\sum_{j\in \mathcal{N}_k}w_{kj}(\hat{\mathbf{x}}_{t}^{(j)} - \hat{\mathbf{x}}_{t}^{(k)})$
		\STATE $\mathbf{q}_{t}^{(k)} = Q(\mathbf{x}_{t+1}^{(k)}  - \hat{\mathbf{x}}_{t}^{(k)})$
		\STATE Send $\mathbf{q}_{t}^{(k)} $ and receive $\mathbf{q}_{t}^{(j)}$ for $j\in  \mathcal{N}_k$
		\STATE $\hat{\mathbf{x}}_{t+1}^{(j)} = \hat{\mathbf{x}}_{t}^{(j)} +\mathbf{q}_{t}^{(j)}  $ for $j\in  \mathcal{N}_k$
		\ELSE
		\STATE $\mathbf{x}_{t+1}^{(k)} =\mathbf{x}_{t+\frac{1}{2}}^{(k)}$
		\STATE $\hat{\mathbf{x}}_{t+1}^{(j)} = \hat{\mathbf{x}}_{t}^{(j)} $ for $j\in  \mathcal{N}_k$
		\ENDIF
		
		\ENDFOR
	\end{algorithmic}
\end{algorithm}

The above communication protocol is first introduced by \cite{koloskova2019decentralizedcon}. But they only focus on the conventional decentralized SGD without momentum and periodic communication. Here, the momentum term, periodic communication, and compressed communication make it more challenging to study the convergence of Algorithm~\ref{alg_dec_sgdm_com}. In the following, we will show how these factors affect the convergence rate. 

\begin{theorem} \label{theorem2}
	Under Assumption~\ref{graph}--\ref{norm}, if we choose $\eta<\frac{(1-\mu)^2}{2L}$ and $0<\mu<1$, we have
	\begin{equation}
	\begin{aligned}
	& \frac{1}{T}\sum_{t=0}^{T-1} \| \nabla   f(\bar{\mathbf{x}}_{t})\|^2  \leq \frac{2(1-\mu)(f(\mathbf{x}_{0})  -  f^* )}{\eta T}+ \frac{\mu\eta \sigma^2L}{(1-\mu)^2K} \\
	& \quad \quad \quad \quad \quad \quad \quad \quad  + \frac{4\eta^2p^2G^2L^2}{(1-\mu)^2} (1+\frac{4}{\alpha^2})   + \frac{\eta\sigma^2L}{(1-\mu)K}  \ ,
	\end{aligned}
	\end{equation}
	where $\alpha=\frac{\rho^2\delta}{82}$.
\end{theorem}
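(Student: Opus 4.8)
The plan is to follow the proof of Theorem~\ref{theorem1} as closely as possible and to isolate the one place where compression genuinely changes the argument, namely the consensus-error bound. The first observation is that compression leaves the averaged iterate untouched: because $W$ is doubly stochastic (Assumption~\ref{graph}), averaging the consensus correction $\gamma\sum_{j\in\mathcal{N}_k}w_{kj}(\hat{\mathbf{x}}_t^{(j)}-\hat{\mathbf{x}}_t^{(k)})$ over $k$ yields zero, so $\bar{\mathbf{x}}_{t+1}=\bar{\mathbf{x}}_t-\eta\mathbf{m}_t$ holds at every iteration exactly as in Algorithm~\ref{alg_dec_sgdm}. Thus the dynamics of $\bar{\mathbf{x}}_t$, and all the momentum bookkeeping built on them, are identical to the PD-SGDM case.

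To linearize the momentum I would introduce the auxiliary sequence $\mathbf{y}_t=\bar{\mathbf{x}}_t-\frac{\eta\mu}{1-\mu}\mathbf{m}_{t-1}$, which satisfies $\mathbf{y}_{t+1}=\mathbf{y}_t-\frac{\eta}{1-\mu}\bar{\mathbf{g}}_t$ with $\bar{\mathbf{g}}_t=\frac{1}{K}\sum_k\nabla F^{(k)}(\mathbf{x}_t^{(k)};\xi_t^{(k)})$; this recasts heavy-ball momentum as plain SGD on $\mathbf{y}_t$ with effective step $\frac{\eta}{1-\mu}$, which is the origin of the $(1-\mu)$ factors. Applying $L$-smoothness (Assumption~\ref{smooth}) to $f(\mathbf{y}_{t+1})$ and taking expectations gives a one-step descent inequality containing three error sources: the gradient variance, which under Assumption~\ref{var1} and worker independence contributes the $\frac{\sigma^2}{K}$ and $\frac{\mu\sigma^2}{(1-\mu)K}$ terms; the offset $\|\mathbf{y}_t-\bar{\mathbf{x}}_t\|=\frac{\eta\mu}{1-\mu}\|\mathbf{m}_{t-1}\|$, handled by the geometric bound $\|\mathbf{m}_t^{(k)}\|\le\frac{\sqrt{G}}{1-\mu}$ from Assumption~\ref{norm}; and the consensus error $\frac{1}{K}\sum_k\|\bar{\mathbf{x}}_t-\mathbf{x}_t^{(k)}\|^2$. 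Up to this point the computation coincides with that of Theorem~\ref{theorem1}.

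The crux, and the main obstacle, is the consensus-error bound under the compressed protocol, where Lemma~\ref{lemma_w_spectral} alone no longer applies because the workers only exchange the quantized residuals $\mathbf{q}_t^{(k)}=Q(\mathbf{x}_{t+1}^{(k)}-\hat{\mathbf{x}}_t^{(k)})$. Following the CHOCO-gossip analysis of \cite{koloskova2019decentralizedcon}, I would build a coupled Lyapunov potential that combines the true consensus error $\sum_k\|\mathbf{x}_t^{(k)}-\bar{\mathbf{x}}_t\|^2$ with the compression-residual error $\sum_k\|\hat{\mathbf{x}}_t^{(k)}-\mathbf{x}_t^{(k)}\|^2$, and show that on a communication step, for a suitably tuned consensus step $\gamma$, this potential contracts by $1-\alpha$ with $\alpha=\frac{\rho^2\delta}{82}$; here the $\rho^2$ comes from composing the mixing contraction of Lemma~\ref{lemma_w_spectral} with the $\delta$-contraction of $Q$ (Definition~\ref{compress_op}). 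The delicate part is threading this through the periodic schedule: on the $p-1$ non-communication steps the auxiliary variables are frozen (Line~12 of Algorithm~\ref{alg_dec_sgdm_com}) while the iterates keep drifting, so the potential must be propagated across a full period, with the drift injected over those $p$ steps---bounded again through Assumption~\ref{norm}---serving as the forcing term and producing the $\eta^2p^2G^2$ scaling of the final term.

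Summing this contraction recursion then reproduces the $\frac{1}{\alpha^2}$ dependence in the manner of Theorem~\ref{theorem1}: a Young's inequality that separates the drift (a $\frac{2}{\alpha}$ factor) composed with the geometric series of the $1-\alpha$ contraction (a second $\frac{2}{\alpha}$ factor) yields the $(1+\frac{4}{\alpha^2})$ shape, and the extra $L^2$ enters when the smoothness step turns the consensus error on iterates into a gradient discrepancy. Finally I would telescope the one-step descent over $t=0,\dots,T-1$, invoke the restriction $\eta<\frac{(1-\mu)^2}{2L}$ to absorb the $\frac{L\eta^2}{2(1-\mu)^2}\mathbb{E}\|\bar{\mathbf{g}}_t\|^2$ term into the negative gradient term on the left-hand side, use $f(\mathbf{y}_0)=f(\mathbf{x}_0)$ (since $\mathbf{m}_{-1}=\mathbf{0}$) together with $f(\mathbf{y}_T)\ge f^*$, and divide by $T$ to reach the stated inequality.
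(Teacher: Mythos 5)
Your proposal matches the paper's proof essentially step for step: the same observation that double stochasticity makes the averaged iterate evolve as $\bar{\mathbf{x}}_{t+1}=\bar{\mathbf{x}}_t-\eta\bar{\mathbf{m}}_t$ despite compression, the same momentum-linearizing auxiliary sequence (your $\mathbf{y}_t$ is the paper's $\mathbf{z}_t$), the same CHOCO-style coupled Lyapunov $\sum_k\|\mathbf{x}_t^{(k)}-\bar{\mathbf{x}}_t\|^2+\sum_k\|\mathbf{x}_t^{(k)}-\hat{\mathbf{x}}_t^{(k)}\|^2$ contracting by $1-\alpha$ with $\alpha=\rho^2\delta/82$ at communication rounds while the $p$-step drift bounded by $G$ serves as forcing, and the same Young's-inequality-plus-geometric-series argument yielding the $(1+\frac{4}{\alpha^2})$ factor before telescoping under $\eta<\frac{(1-\mu)^2}{2L}$. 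This is the paper's argument, correctly reconstructed.
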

The proof can be found in Supplement \ref{proof_theorem_2}.  Compared with  Algorithm~\ref{alg_dec_sgdm}, the convergence rate of Algorithm~\ref{alg_dec_sgdm_com} depends on both the spectral gap $\rho$ and the  compression parameter $\delta$. Moreover,  it has a worse dependence on the  spectral gap $\rho$ than that of  Algorithm~\ref{alg_dec_sgdm}.

\begin{corollary} \label{cor_2}
	Under Assumption~\ref{graph}--\ref{norm}, if we choose $\eta=O(\frac{K^{1/2}}{T^{1/2}})$ and $p=O(\frac{T^{1/4}}{K^{\tau}})$ where $\tau>0$, we can get
	\begin{equation}
	\frac{1}{T}\sum_{t=0}^{T-1} \| \nabla   f(\bar{\mathbf{x}}_{t})\|^2  = O(\frac{1}{\sqrt{KT}}) + O(\frac{1}{\rho^4\delta^2K^{2\tau-1}\sqrt{T}} ) \ .
	\end{equation}
\end{corollary}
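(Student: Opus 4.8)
The plan is to obtain the corollary as an immediate consequence of Theorem~\ref{theorem2}, treating $\mu$, $L$, $\sigma$, $G$, and $f(\mathbf{x}_0)-f^*$ as absolute constants (so that every factor involving $(1-\mu)$ is $O(1)$) and substituting the prescribed schedules $\eta=O(K^{1/2}/T^{1/2})$ and $p=O(T^{1/4}/K^{\tau})$ into the four terms on the right-hand side of the theorem. First I would check that the step-size admissibility condition $\eta<\frac{(1-\mu)^2}{2L}$ required by Theorem~\ref{theorem2} is eventually met: since $\eta=O(K^{1/2}/T^{1/2})\to 0$ as $T$ grows with $K$ fixed, the condition holds for all sufficiently large $T$, so the theorem applies under the stated schedule.

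Next I would evaluate the three ``optimization/variance'' terms. The optimization term $\frac{2(1-\mu)(f(\mathbf{x}_0)-f^*)}{\eta T}$ becomes $O\!\left(\frac{T^{1/2}}{K^{1/2}T}\right)=O\!\left(\frac{1}{\sqrt{KT}}\right)$, while both stochastic-variance terms $\frac{\mu\eta\sigma^2 L}{(1-\mu)^2 K}$ and $\frac{\eta\sigma^2 L}{(1-\mu)K}$ scale as $O\!\left(\frac{K^{1/2}/T^{1/2}}{K}\right)=O\!\left(\frac{1}{\sqrt{KT}}\right)$. These three contributions collapse into the first claimed rate $O(1/\sqrt{KT})$.

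The remaining consensus term $\frac{4\eta^2 p^2 G^2 L^2}{(1-\mu)^2}\bigl(1+\frac{4}{\alpha^2}\bigr)$ is where the topology and compression dependence surfaces. The key bookkeeping is to substitute $\alpha=\frac{\rho^2\delta}{82}$ so that $\frac{4}{\alpha^2}=O\!\left(\frac{1}{\rho^4\delta^2}\right)$ dominates the constant $1$, and to compute $\eta^2 p^2 = O\!\left(\frac{K}{T}\cdot\frac{T^{1/2}}{K^{2\tau}}\right)=O\!\left(\frac{1}{K^{2\tau-1}\sqrt{T}}\right)$. Multiplying these two factors gives exactly $O\!\left(\frac{1}{\rho^4\delta^2 K^{2\tau-1}\sqrt{T}}\right)$, the second claimed term, and summing the two orders yields the stated bound.

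Since this is a plug-in corollary, no genuinely hard step is expected: the analytical substance was already discharged in proving Theorem~\ref{theorem2}. The only point requiring care — and the closest thing to an ``obstacle'' — is the order tracking of the consensus term, specifically propagating $\alpha=\rho^2\delta/82$ through the squared reciprocal to recover the $\rho^{-4}\delta^{-2}$ factor, and confirming that the three constant-order variance/optimization contributions all share the common rate $1/\sqrt{KT}$ so they can be absorbed into a single $O(\cdot)$ without altering the leading behaviour.
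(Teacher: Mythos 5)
Your proposal is correct and coincides with the paper's (implicit) derivation: the paper states Corollary~\ref{cor_2} as a direct plug-in of $\eta=O(K^{1/2}/T^{1/2})$ and $p=O(T^{1/4}/K^{\tau})$ into Theorem~\ref{theorem2}, with $1+4/\alpha^2=O(\rho^{-4}\delta^{-2})$ via $\alpha=\rho^2\delta/82$, exactly as you compute. Your additional check that $\eta<\frac{(1-\mu)^2}{2L}$ holds for sufficiently large $T$ is a small point of care the paper leaves tacit.
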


\begin{remark}
	Similar  with Corollary~\ref{cor_1},  when $\tau\leq \frac{3}{4}$, the second term is  the dominant term so that the  spectral gap  $\rho$ and the compression parameter $\delta$  slow down the convergence speed. When $\tau > \frac{3}{4}$, the first term is dominant. The spectral gap and  the compression parameter  only affect the high  order term. Thus, the convergence speed is $O(\frac{1}{\sqrt{KT}})$, which indicates a linear speedup with respect to the number of workers. 
\end{remark}

\section{Experiments}
In this section, we will conduct extensive experiments to verify the performance of our proposed Algorithm~\ref{alg_dec_sgdm} and Algorithm~\ref{alg_dec_sgdm_com}.  

\subsection{Experimental Settings}
In our experiments, 8 workers connected in a ring topology are used and each worker is an  Nvidia Tesla P40 GPU. Thus, each worker only needs to communicate with  two  neighboring workers. Our algorithms are implemented with PyTorch \cite{paszke2019pytorch}. To evaluate the  performance of our algorithms, we use two datasets in our experiments. The detailed settings for these two datasets are described as follows.
\vspace{-5pt}
\begin{itemize}
	\item CIFAR-10 \cite{krizhevsky2009learning} is a widely used image dataset for the classification task. Here, the deep neural network used for this dataset is ResNet20 \cite{he2016deep}. Following \cite{koloskova2019decentralizednonconv}, we train this model for 300 epochs totally.  The initial learning rate is first set to 0.1 and then decayed by 0.1 at epoch 150 and 225. The weight decay coefficient is set to $1\times 10^{-4}$, and the momentum coefficient $\mu$ is set to 0.9. The mini-batch size on each worker is 16. 
	\item ImageNet \cite{ILSVRC15} is a benchmark dataset for the large-scale image classification task. In this experiment, we use ResNet50 \cite{he2016deep} for this dataset. The number of epochs is 90. The initial learning rate is also 0.1 and decayed by 0.1 at epoch 30, 60, and 80. The weight decay and the momentum coefficient are the same as those of CIFAR-10. The mini-batch size on each worker is 32. 
	
\end{itemize}
Moreover, for Algorithm~\ref{alg_dec_sgdm_com}, the  consensus step size $\gamma$ is set to 0.4 for CIFAR-10 and 0.5 for  ImageNet. To compress the communicated parameter, we  use the sign operator \cite{bernstein2018signsgd}.

\subsection{Experimental Results}
We first compare our PD-SGDM with the regular centralized momentum SGD (C-SGDM). Here, we use different communication periods $p=4,8,16$. In Figure~\ref{train_loss_sgd_iters_cifar10} and Figure~\ref{train_loss_sgd_iters_imagenet}, we report the training loss of ResNet20 and ResNet50 regarding the number of iterations respectively. From these two figures, we can see that  PD-SGDM with different $p$ and C-SGDM converge to almost the same value.   In  addition, from Figure~\ref{test_acc_sgd_epochs_cifar10} and Figure~\ref{test_acc_sgd_epochs_imagenet}, we can find that  PD-SGDM with different $p$ have almost the same final testing accuracy as C-SGDM. In other words, the periodic communication does not hurt the generalization performance of PD-SGDM.

\begin{figure}[!htbp]
	\centering 
	\subfigure[ResNet20@CIFAR-10]{
		\includegraphics[width=0.36\textwidth]{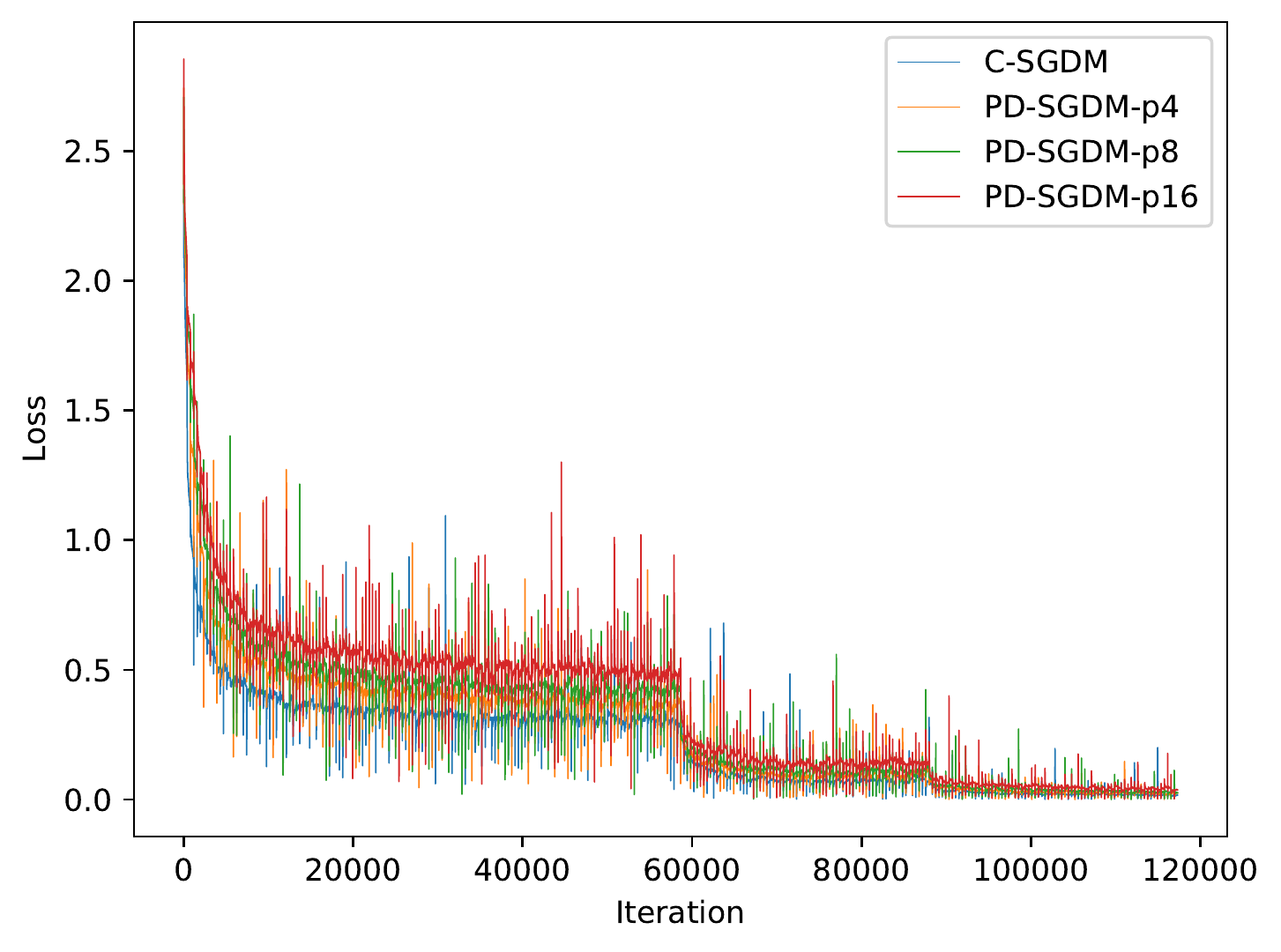}
		\label{train_loss_sgd_iters_cifar10}
	}
	\subfigure[ResNet50@ImageNet]{
		\includegraphics[width=0.3558\textwidth]{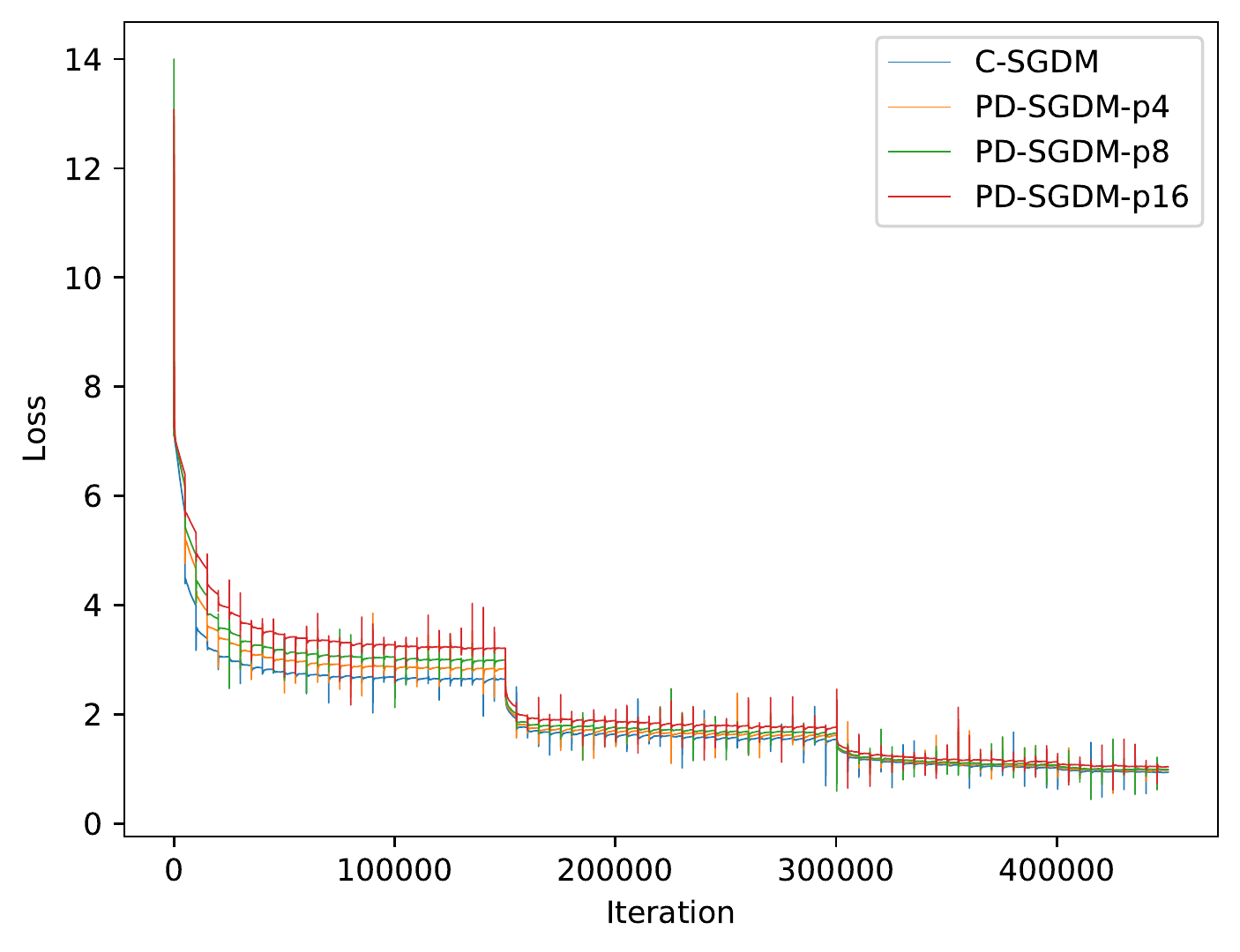}
		\label{train_loss_sgd_iters_imagenet}
	}
	\subfigure[ResNet20@CIFAR-10]{
		\includegraphics[width=0.3558\textwidth]{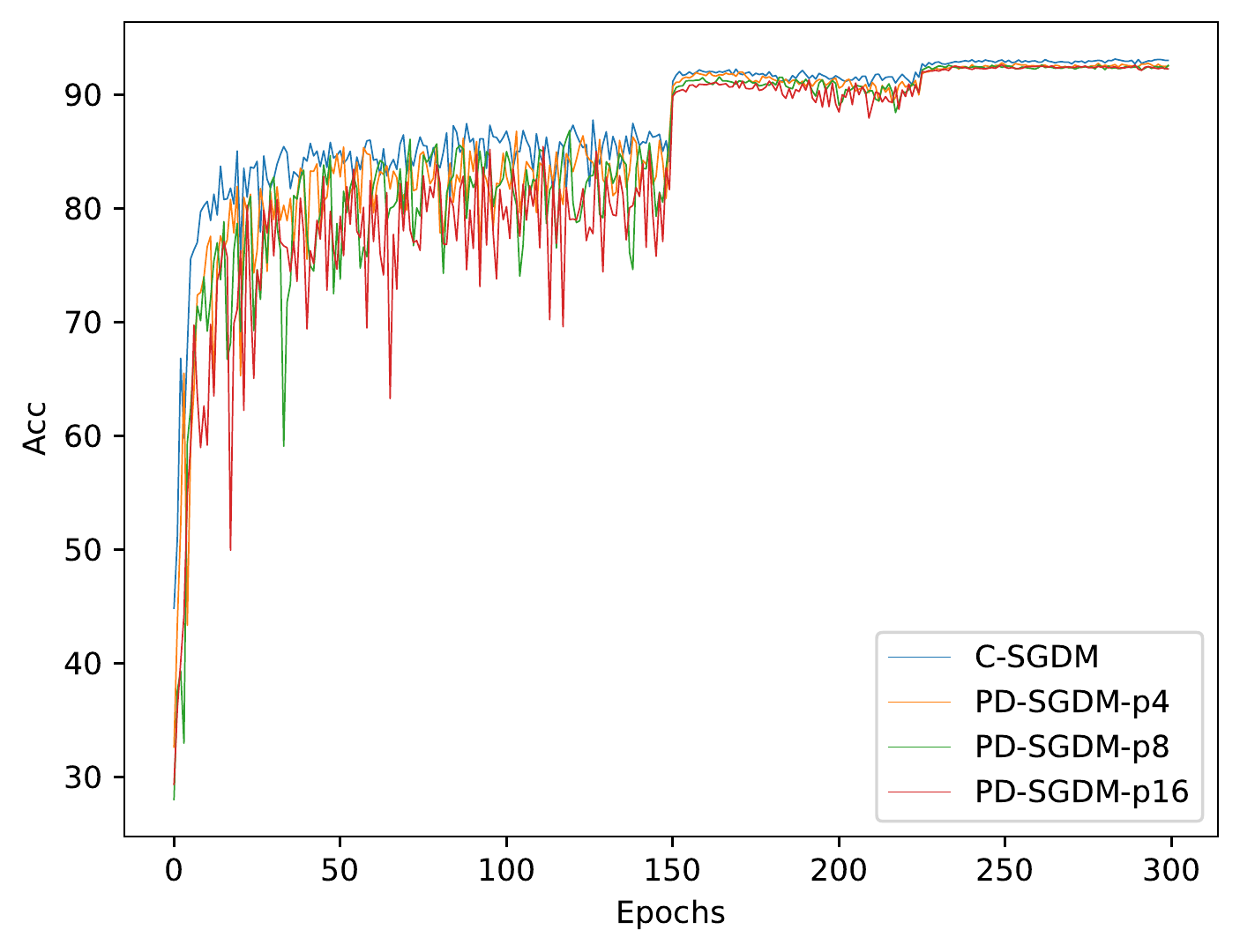}
		\label{test_acc_sgd_epochs_cifar10}
	}
	\subfigure[ResNet50@ImageNet]{
		\includegraphics[width=0.3558\textwidth]{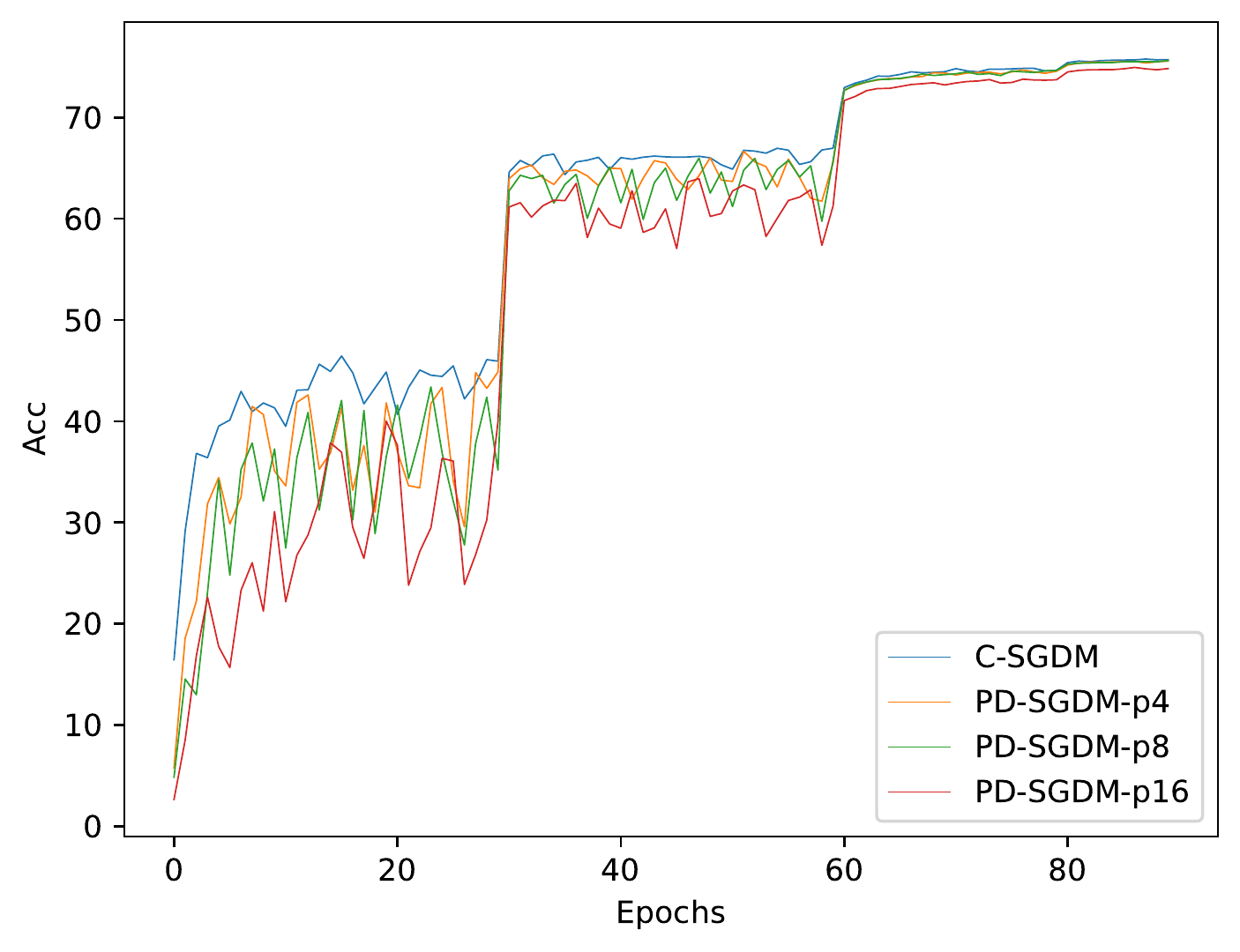}
		\label{test_acc_sgd_epochs_imagenet}
	}
	\caption{ The convergence and generalization performance of PD-SGDM. }
	\label{full_train_test}
\end{figure}

Furthermore, we report the communication cost of PD-SGDM in Figure~\ref{test_acc_sgd_bits_cifar10} and~\ref{test_acc_sgd_bits_imagenet}. In particular, we show the testing accuracy of PD-SGDM regarding the communication cost (MB) for CIFAR10 in Figure~\ref{test_acc_sgd_bits_cifar10} and the testing accuracy of PD-SGDM for ImageNet in Figure~\ref{test_acc_sgd_bits_imagenet}.  From these two figures, it can be seen that a larger $p$ leads to less communication cost but it does not hurt the convergence and generalization performance.

\begin{figure}[!htbp]
	\centering 
	\subfigure[ResNet20@CIFAR-10]{
		\includegraphics[width=0.3558\textwidth]{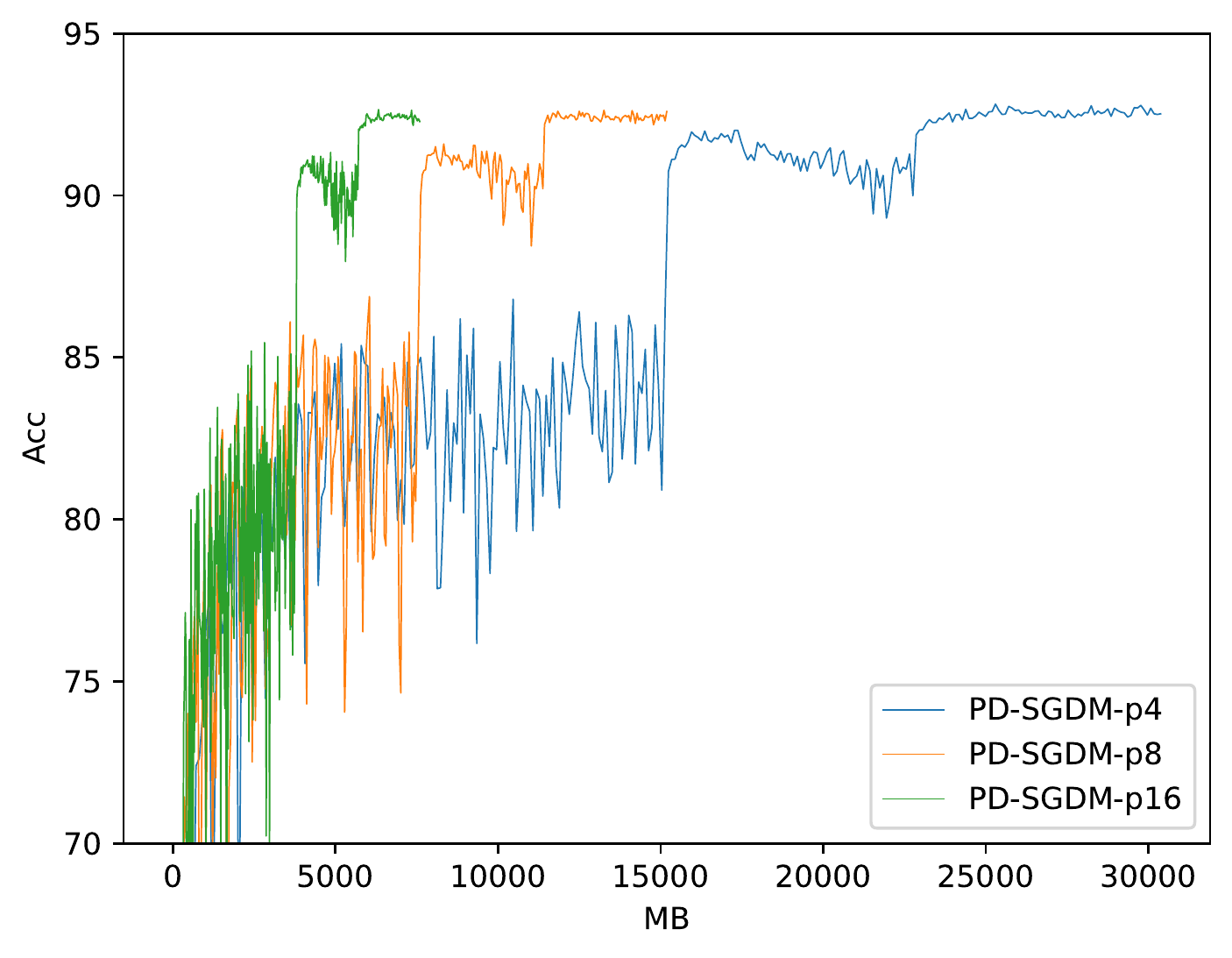}
		\label{test_acc_sgd_bits_cifar10}
	}
	\subfigure[ResNet50@ImageNet]{
		\includegraphics[width=0.3558\textwidth]{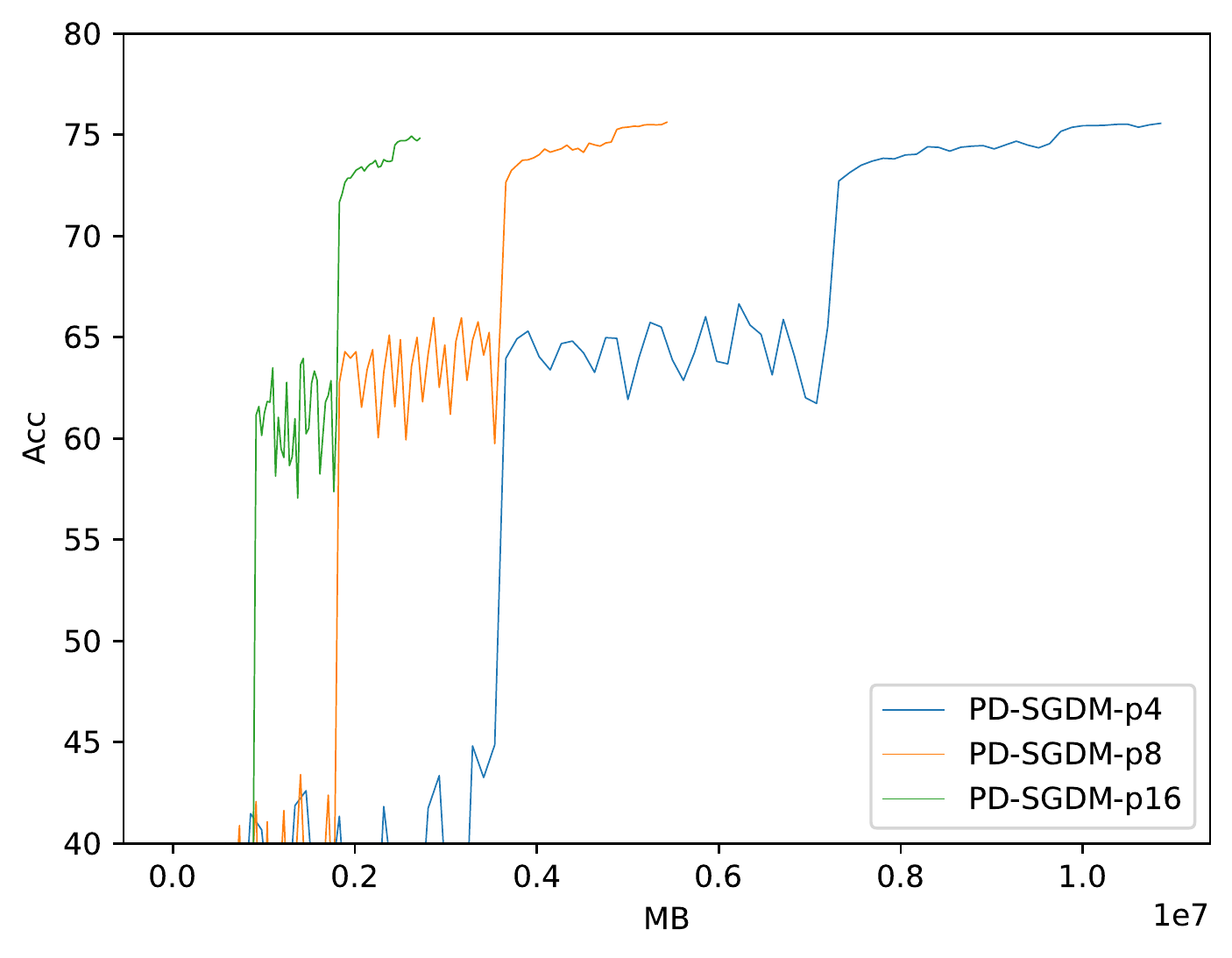}
		\label{test_acc_sgd_bits_imagenet}
	}
	\subfigure[ResNet20@CIFAR-10]{
		\includegraphics[width=0.3558\textwidth]{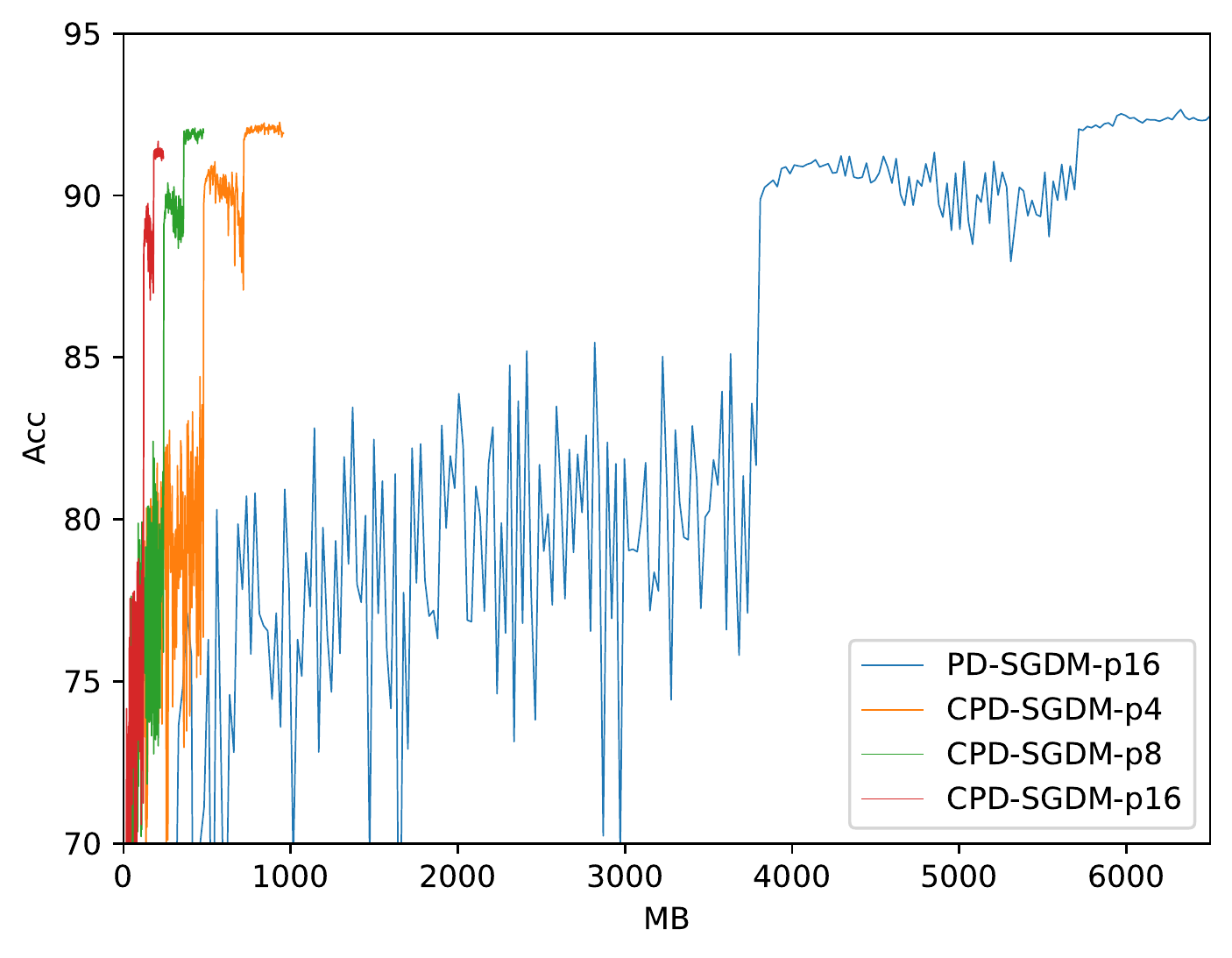}
		\label{test_acc_sgd_choco_bits_cifar10}
	}
	\subfigure[ResNet50@ImageNet]{
		\includegraphics[width=0.3558\textwidth]{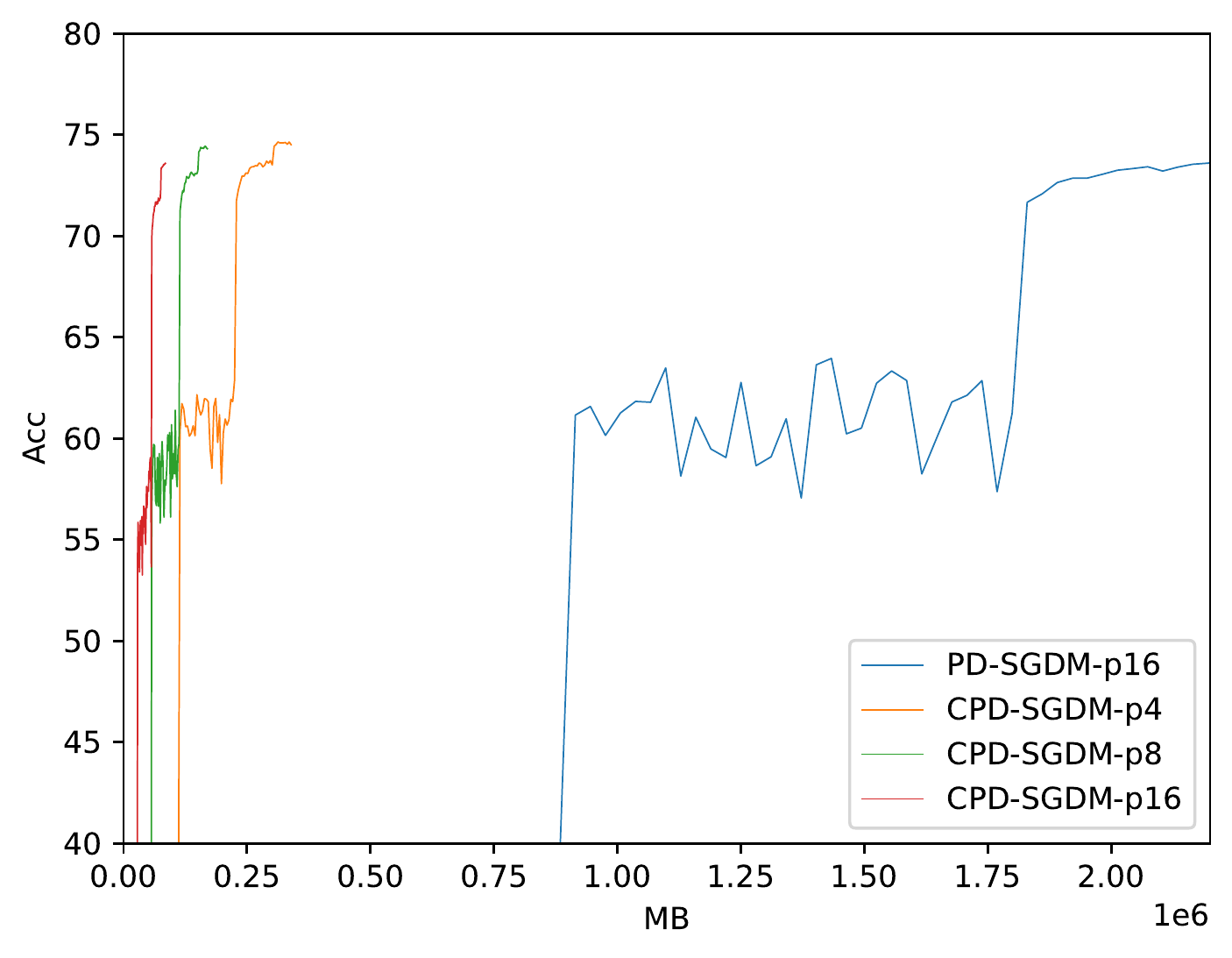}
		\label{test_acc_sgd_choco_bits_imagenet}
	}
	\caption{ The convergence and generalization performance of PD-SGDM (a-b) and CPD-SGDM (c-d) regarding the communication cost (MB).}
	\label{communication}
\end{figure}

Moreover, we conduct experiments to evaluate the performance of our Algorithm~\ref{alg_dec_sgdm_com}. In Figure~\ref{train_loss_sgd_choco_iters_cifar10} and Figure~\ref{train_loss_sgd_choco_iters_imagenet}, we compare the training loss of CPD-SGDM with that of the full-precision PD-SGDM (p=4) for  ResNet20 and ResNet50. From these  two figures, it is easy to find that CPD-SGDM converges to almost the same value with the full-precision PD-SGDM even though it  compresses the communicated parameters. In Figure~\ref{test_acc_sgd_choco_bits_cifar10} and Figure~\ref{test_acc_sgd_choco_bits_imagenet}, we demonstrate the testing accuracy of CPD-SGDM with different communication periods $p$ regarding the  number of epochs. It also can be found that CPD-SGDM has almost the same final testing performance with the full-precision PD-SGDM, which further  confirms the effectiveness of our Algorithm~\ref{alg_dec_sgdm_com}.

\begin{figure}[!htbp]
	\centering 
	\subfigure[ResNet20@CIFAR-10]{
		\includegraphics[width=0.363\textwidth]{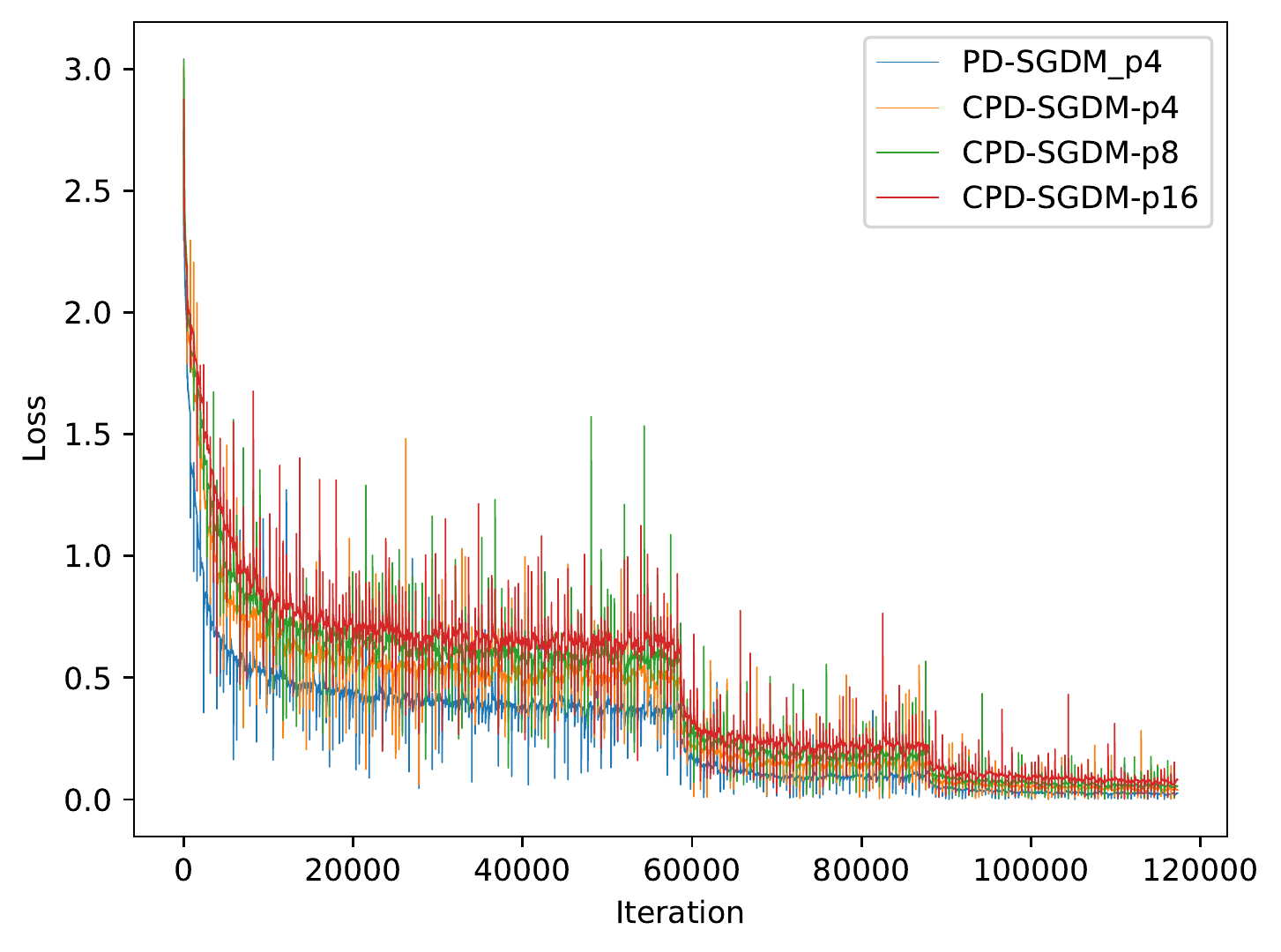}
		\label{train_loss_sgd_choco_iters_cifar10}
	}
	\subfigure[ResNet50@ImageNet]{
		\includegraphics[width=0.3558\textwidth]{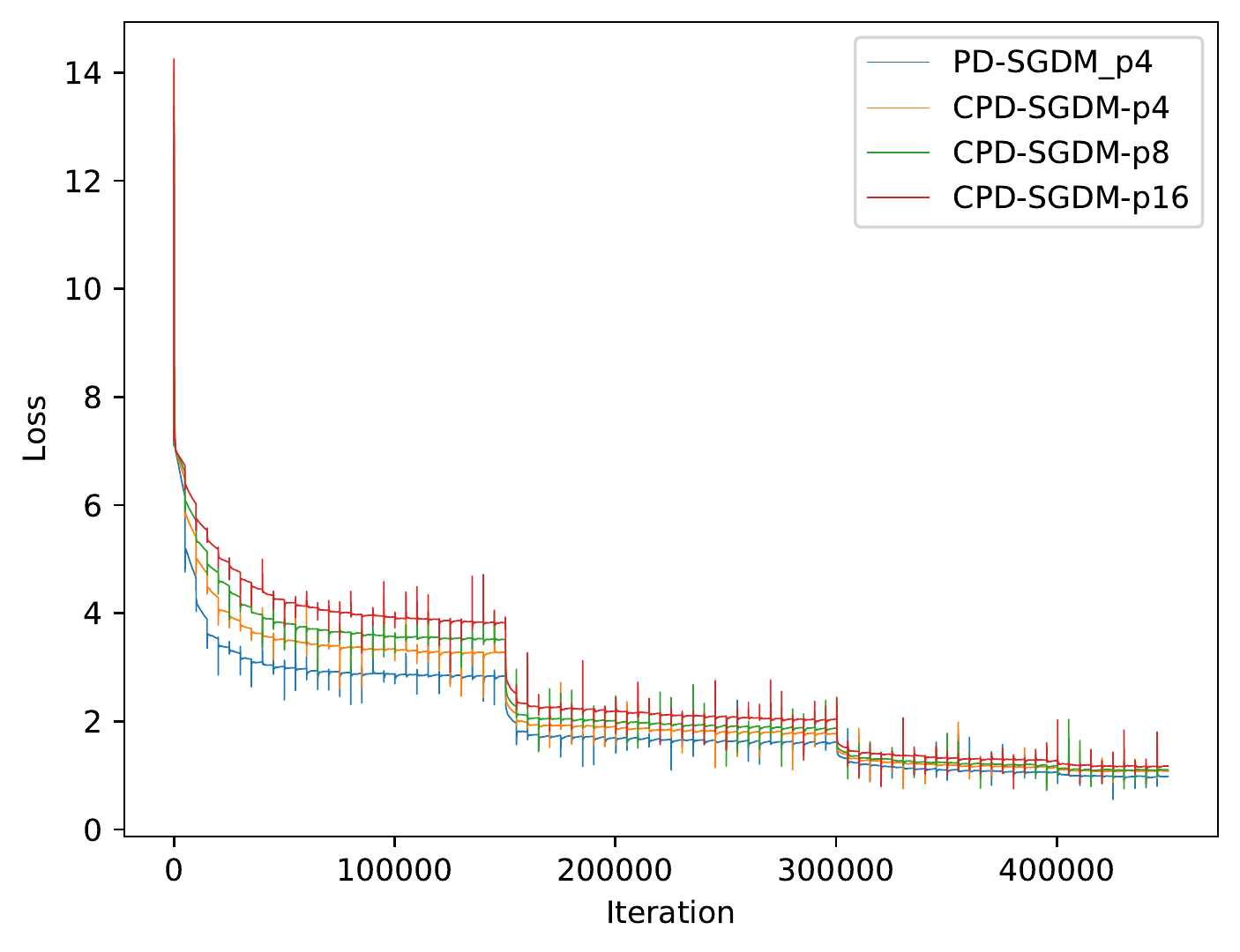}
		\label{train_loss_sgd_choco_iters_imagenet}
	}
	\subfigure[ResNet20@CIFAR-10]{
		\includegraphics[width=0.3558\textwidth]{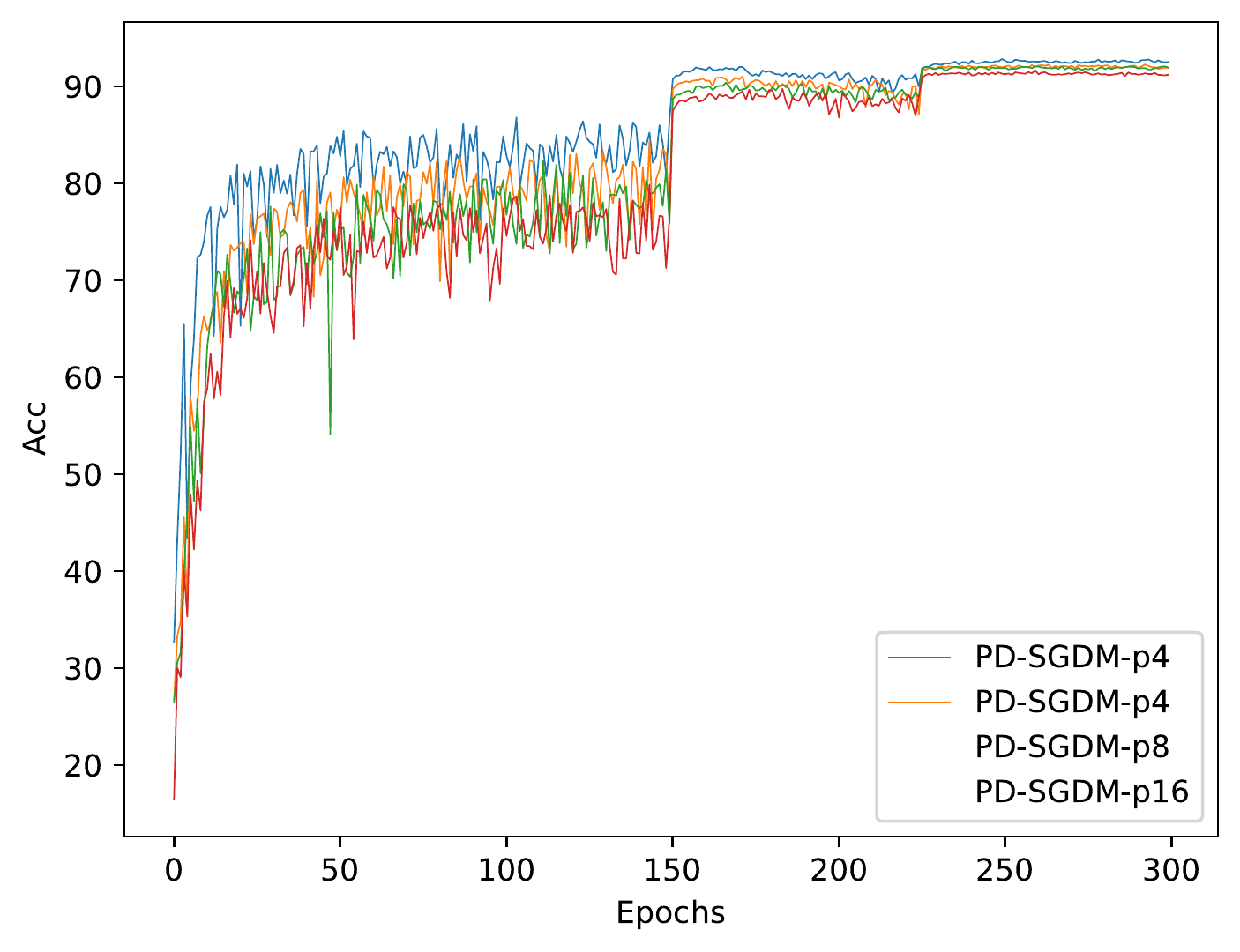}
		\label{test_acc_sgd_choco_epochs_cifar10}
	}
	\subfigure[ResNet50@ImageNet]{
		\includegraphics[width=0.3558\textwidth]{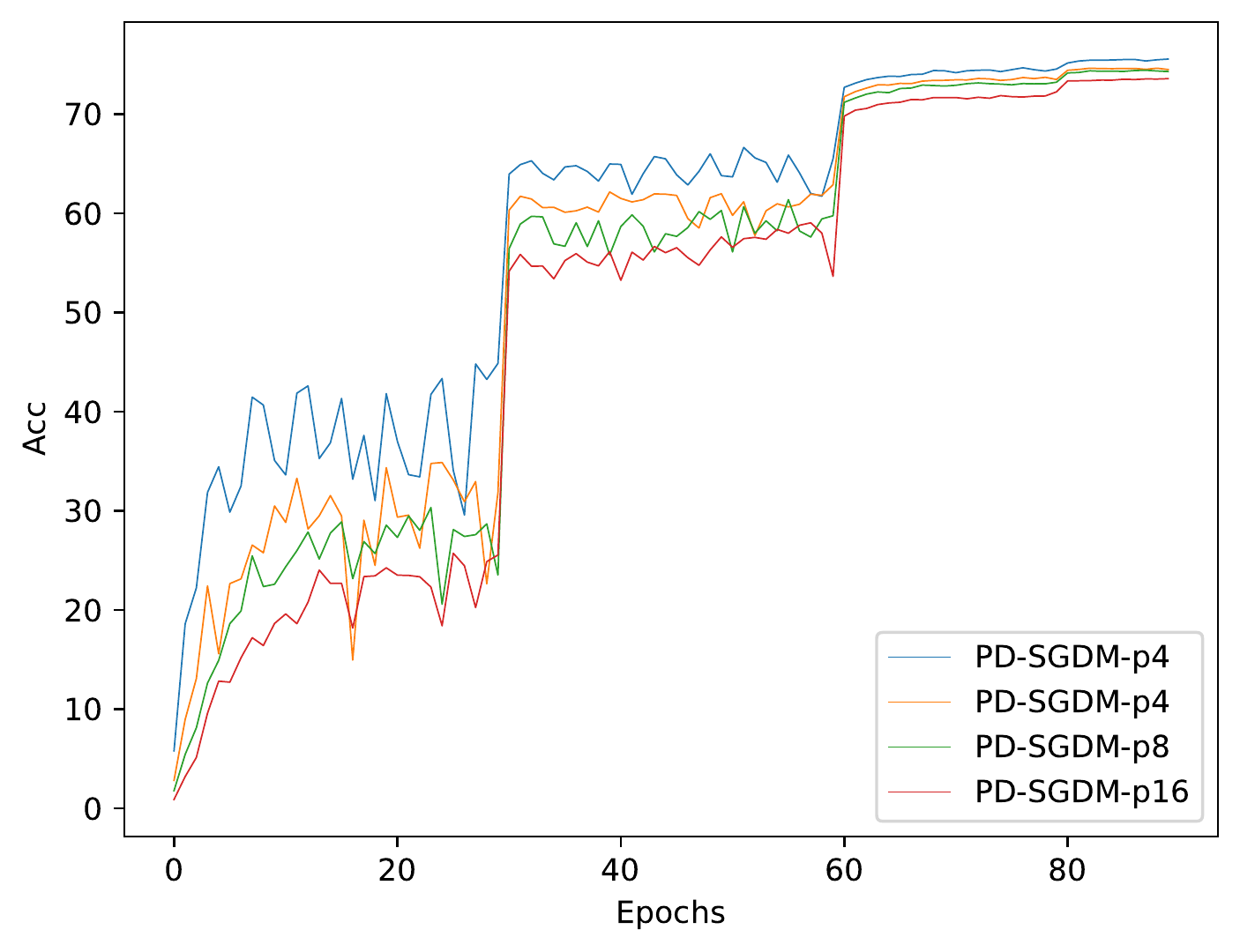}
		\label{test_acc_sgd_choco_epochs_imagenet}
	}
	\caption{ The convergence and generalization performance of CPD-SGDM. }
	\label{compress_train_test}
\end{figure}

At last, we  report the communication cost of CPD-SGDM in Figure~\ref{test_acc_sgd_choco_bits_cifar10} and~\ref{test_acc_sgd_choco_bits_imagenet}. Here, we show the testing accuracy regarding the communication cost for ResNet20 and ResNet50 respectively.  Obviously, we can find that CPD-SGDM has less communication cost than PD-SGDM (p=16) \footnote{Because PD-SGDM (p=16)  has much less communication cost than PD-SGDM with other communication periods (4, 8), we only compare CPD-SGDM with PD-SGDM (p=16).} because CPD-SGDM  has less communication cost in each communication round, which further confirms the communication efficiency of our proposed  CPD-SGDM .

\section{Conclusions}

Decentralized training methods have been actively studied in recent years. However, most of them only focus on the conventional SGD method, neglecting the more widely used momentum SGD method. In this paper, we bridged this gap and proposed a novel periodic decentralized momentum SGD method. It uses the momentum strategy to accelerate convergence and periodic communication approach to reduce the number of communication rounds. By carefully studying these factors, we disclose under which condition our method can achieve the linear speedup regarding the number of workers. Moreover, to reduce the communication cost, especially for large models, we further propose a  new communication-efficient periodic decentralized momentum SGD method. It is also proved to enjoy a linear speedup under certain conditions.  All these theoretical results are the first time to be disclosed.  At last, our empirical results have confirmed the correctness and effectiveness of our proposed two methods.

\bibliographystyle{plain} 
\bibliography{bibfile}

\onecolumn
\section{Supplement}


\subsection{Proof of  Theorem~\ref{theorem1}} \label{proof_theorem_1}
To prove the convergence, we use the following notation:
\begin{equation} \label{mat_1}
\begin{aligned}
& X_t = [\mathbf{x}_{t}^{(1)}, \mathbf{x}_{t}^{(2)}, \cdots, \mathbf{x}_{t}^{(K)}] \in \Re^{d\times K}  \ ,\\
& \bar{X}_t = [\bar{\mathbf{x}}_t, \bar{\mathbf{x}}_t, \cdots, \bar{\mathbf{x}}_t] \in \Re^{d\times K}  \ ,\\
& \Delta_t= [\mathbf{m}^{(1)}_t, \mathbf{m}^{(2)}_t , \cdots, \mathbf{m}^{(K)}_t] \in \Re^{d\times K}   \ .\\
\end{aligned}
\end{equation}

Then, we can represent Algorithm~\ref{alg_dec_sgdm}  as follows:
\begin{equation}
\begin{aligned}
&  X_{t+\frac{1}{2}}  = X_t - \eta \Delta_t  \ ,\\
& X_{t+1}= \begin{cases}
X_{t+\frac{1}{2}}  ,  & \mod($t+1$, $p$)\neq 0 \\
X_{t+\frac{1}{2}}W, & \mod($t+1$, $p$)=0\\
\end{cases} \ .
\end{aligned}
\end{equation}
Moreover, since $W$ is a doubly stochastic matrix, we have $W \frac{1}{K}\mathbf{1}\mathbf{1}^T =\frac{1}{K}\mathbf{1}\mathbf{1}^T$. Then, for $\forall t\geq 0$, we can get 
\begin{equation}
\bar{X}_{t+1} =  X_{t+1}\frac{1}{K}\mathbf{1}\mathbf{1}^T = (X_t - \eta\Delta_t ) \frac{1}{K}\mathbf{1}\mathbf{1}^T = \bar{X}_{t} - \eta \Delta_t \frac{1}{K}\mathbf{1}\mathbf{1}^T  \ .
\end{equation}
In other words,  for $\forall t\geq 0$,  we have
\begin{equation}
\bar{\mathbf{x}}_{t+1} = 	\bar{\mathbf{x}}_{t} - \frac{\eta }{K}\sum_{k=1}^{K}\mathbf{m}_t^{(k)} \ .
\end{equation}
Furthermore, we introduce an auxiliary variable for proving Theorem~\ref{theorem1} as follows:
\begin{equation}
\begin{aligned}
&\mathbf{z}_{t}= \begin{cases}
\bar{\mathbf{x}}_{t} ,  & t=0 \\
\frac{1}{1-\mu}\bar{\mathbf{x}}_{t}  - \frac{\mu}{1-\mu} \bar{\mathbf{x}}_{t-1} , & t\geq 1\\
\end{cases} \ .
\end{aligned}
\end{equation}

Then, it is easy to have the following equations \cite{yu2019linear}: 
\begin{equation} \label{aux_z}
\begin{aligned}
& \mathbf{z}_{t+1} - \mathbf{z}_{t} = -\frac{\eta}{1-\mu} \frac{1}{K}\sum_{k=1}^{K}\nabla F^{(k)}(\mathbf{x}_{t}^{(k)}; \xi_{t}^{(k)})  \ ,\\
& \mathbf{z}_{t} - \bar{\mathbf{x}}_{t}  = -\frac{\eta\mu}{1-\mu} \bar{\mathbf{m}}_{t} \ .
\end{aligned}
\end{equation}
where $\bar{\mathbf{m}}_{t} =\frac{1 }{K}\sum_{k=1}^{K}\mathbf{m}_t^{(k)} $.

\begin{lemma} \label{gradient_norm}
	Under Assumption~\ref{var1}, we have
	\begin{equation}
	\mathbb{E}[\| \frac{1}{K}\sum_{k=1}^{K}\nabla F(\mathbf{x}_{t}^{(k)}; \xi_{t}^{(k)})\|^2]  \leq \frac{\sigma^2}{K}+ \mathbb{E}[\|\frac{1}{K}\sum_{k=1}^{K}\nabla f^{(k)}(\mathbf{x}_{t}^{(k)})\|^2]   \ .
	\end{equation}
\end{lemma}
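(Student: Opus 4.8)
The plan is to exploit the standard bias–variance decomposition of the averaged stochastic gradient: I split it into its conditional mean (the averaged full gradient) plus a zero-mean sampling noise, and then use the independence of the sampling across the $K$ workers to extract the crucial $1/K$ factor on the variance term.

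First I would write, for each worker $k$, the stochastic gradient as $\nabla F^{(k)}(\mathbf{x}_{t}^{(k)};\xi_{t}^{(k)}) = \nabla f^{(k)}(\mathbf{x}_{t}^{(k)}) + \mathbf{e}_{t}^{(k)}$, where $\mathbf{e}_{t}^{(k)} \triangleq \nabla F^{(k)}(\mathbf{x}_{t}^{(k)};\xi_{t}^{(k)}) - \nabla f^{(k)}(\mathbf{x}_{t}^{(k)})$. Conditioning on all randomness up to iteration $t$, so that every $\mathbf{x}_{t}^{(k)}$ is fixed, the definition of $\nabla f^{(k)}$ as the conditional expectation of $\nabla F^{(k)}$ gives $\mathbb{E}[\mathbf{e}_{t}^{(k)}] = \mathbf{0}$, while Assumption~\ref{var1} gives $\mathbb{E}[\|\mathbf{e}_{t}^{(k)}\|^2] \leq \sigma^2$.

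Next I would expand the squared norm of the average into three pieces, namely $\|\frac{1}{K}\sum_{k}\nabla F^{(k)}\|^2 = \|\frac{1}{K}\sum_{k}\nabla f^{(k)}\|^2 + \frac{2}{K^2}\langle \sum_{k}\nabla f^{(k)}, \sum_{k}\mathbf{e}_{t}^{(k)}\rangle + \|\frac{1}{K}\sum_{k}\mathbf{e}_{t}^{(k)}\|^2$. Taking expectations under the same conditioning, the cross term vanishes because the full gradients $\nabla f^{(k)}(\mathbf{x}_{t}^{(k)})$ are deterministic and each $\mathbf{e}_{t}^{(k)}$ has zero conditional mean. For the residual term I would write $\|\frac{1}{K}\sum_{k}\mathbf{e}_{t}^{(k)}\|^2 = \frac{1}{K^2}\sum_{k,j}\langle \mathbf{e}_{t}^{(k)}, \mathbf{e}_{t}^{(j)}\rangle$ and invoke independence of the samples $\xi_{t}^{(k)}$ across workers, so that the off-diagonal expectations ($k\neq j$) vanish, leaving $\frac{1}{K^2}\sum_{k}\mathbb{E}[\|\mathbf{e}_{t}^{(k)}\|^2] \leq \frac{\sigma^2}{K}$ by Assumption~\ref{var1}. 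Combining the three pieces and removing the conditioning via the tower property yields the claim.

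The main obstacle is really the independence-across-workers property, which is what collapses the $K^2$ cross terms down to the $K$ diagonal terms and produces the $1/K$ factor ultimately responsible for the linear speedup. This independence is not made explicit in Assumption~\ref{var1}, so I would state it explicitly (the $\xi_{t}^{(k)}$ are drawn independently across $k$) and then be careful to apply the conditioning consistently at a single level of the filtration, so that both the cross term between mean and noise and the off-diagonal noise correlations are handled under the same expectation.
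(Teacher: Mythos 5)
Your proposal is correct and follows essentially the same route as the paper: the bias--variance decomposition into the averaged full gradient plus zero-mean noise, cancellation of the cross term by unbiasedness, and collapse of the off-diagonal noise terms by independence across workers to obtain the $\sigma^2/K$ bound. If anything, you are slightly more careful than the paper, which invokes the cross-worker independence of the $\xi_t^{(k)}$ silently in its second equality, whereas you flag it as an assumption that should be stated explicitly.
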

\begin{proof}
	\begin{equation}
	\begin{aligned}
	& \mathbb{E}[\| \frac{1}{K}\sum_{k=1}^{K}\nabla F(\mathbf{x}_{t}^{(k)}; \xi_{t}^{(k)})\|^2] \\
	& = \mathbb{E}[\| \frac{1}{K}\sum_{k=1}^{K}\nabla F(\mathbf{x}_{t}^{(k)}; \xi_{t}^{(k)}) - \frac{1}{K}\sum_{k=1}^{K} \nabla f^{(k)}(\mathbf{x}_{t}^{(k)}) + \frac{1}{K}\sum_{k=1}^{K}\nabla f^{(k)}(\mathbf{x}_{t}^{(k)})\|^2] \\
	& = \frac{1}{K^2}\sum_{k=1}^{K}\mathbb{E}[\| \nabla F(\mathbf{x}_{t}^{(k)}; \xi_{t}^{(k)}) - \nabla f^{(k)}(\mathbf{x}_{t}^{(k)})\|_2^2] + \mathbb{E}[\|\frac{1}{K}\sum_{k=1}^{K}\nabla f^{(k)}(\mathbf{x}_{t}^{(k)})\|^2] \\
	& \leq \frac{\sigma^2}{K}+ \mathbb{E}[\|\frac{1}{K}\sum_{k=1}^{K}\nabla f^{(k)}(\mathbf{x}_{t}^{(k)})\|^2]  \ .\\
	\end{aligned}
	\end{equation}
\end{proof}

\begin{lemma} \label{momentum_norm}
	Under Assumption~\ref{norm},  we have
	\begin{equation}
	\begin{aligned}
	& \mathbb{E} [\|\mathbf{m}_{t}^{(k)} \|^2]  \leq \frac{G^2}{(1-\mu)^2} \ .
	\end{aligned}
	\end{equation}
\end{lemma}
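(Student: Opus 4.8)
The plan is to unroll the momentum recursion $\mathbf{m}_t^{(k)} = \mu \mathbf{m}_{t-1}^{(k)} + \nabla F(\mathbf{x}_t^{(k)}; \xi_t^{(k)})$ using the initialization $\mathbf{m}_{-1}^{(k)} = \mathbf{0}$ from Algorithm~\ref{alg_dec_sgdm}. Unrolling gives the explicit expression $\mathbf{m}_t^{(k)} = \sum_{j=0}^{t} \mu^{t-j} \nabla F(\mathbf{x}_j^{(k)}; \xi_j^{(k)})$, which represents the momentum as a geometrically weighted sum of past stochastic gradients. This closed form is the natural starting point because it converts the recursive definition into a sum whose norm can be controlled term by term.

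First I would take norms and apply the triangle inequality, rather than expanding the squared norm directly; the latter would generate cross terms of the form $\mu^{t-j}\mu^{t-j'}\langle \nabla F(\mathbf{x}_j^{(k)};\xi_j^{(k)}), \nabla F(\mathbf{x}_{j'}^{(k)};\xi_{j'}^{(k)})\rangle$ that are awkward to bound uniformly. With the triangle inequality I obtain $\|\mathbf{m}_t^{(k)}\| \leq \sum_{j=0}^t \mu^{t-j} \|\nabla F(\mathbf{x}_j^{(k)}; \xi_j^{(k)})\|$. Next I would invoke Assumption~\ref{norm} to bound each gradient norm by $G$ (reading the stated bound as $\|\nabla F^{(k)}(\mathbf{x}; \xi)\| \leq G$, which is what produces the $G^2$ in the conclusion), pull $G$ out of the sum, and bound the finite geometric sum $\sum_{j=0}^t \mu^{t-j} = \sum_{i=0}^t \mu^i$ by its infinite-series limit $\frac{1}{1-\mu}$, valid since $0 < \mu < 1$. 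This yields the deterministic bound $\|\mathbf{m}_t^{(k)}\| \leq \frac{G}{1-\mu}$.

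Finally I would square both sides to get $\|\mathbf{m}_t^{(k)}\|^2 \leq \frac{G^2}{(1-\mu)^2}$, and since this holds deterministically (pointwise over every realization of the samplings $\xi_j^{(k)}$), taking expectations preserves the inequality and gives the claim. An equivalent route is induction on $t$: assuming $\|\mathbf{m}_{t-1}^{(k)}\| \leq \frac{G}{1-\mu}$, the recursion gives $\|\mathbf{m}_t^{(k)}\| \leq \mu\frac{G}{1-\mu} + G = \frac{G}{1-\mu}$, which closes the induction. There is no substantive obstacle here; the only points requiring mild care are to bound the norm \emph{before} squaring so as to sidestep the cross terms, and to handle the geometric summation using $0<\mu<1$ so that the partial sums are controlled by the convergent infinite series.
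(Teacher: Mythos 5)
Your proof is correct and follows essentially the same route as the paper's: unroll the momentum recursion into a geometrically weighted sum of past stochastic gradients, bound each gradient norm by $G$ via Assumption~\ref{norm}, and control the geometric weights by $\sum_{i\ge 0}\mu^{i}\le \frac{1}{1-\mu}$ using $0<\mu<1$. The only cosmetic difference is that you apply the triangle inequality before squaring, whereas the paper applies Jensen's inequality (convexity of the squared $\ell_2$ norm, with weights normalized by $v_t=\sum_{t'}\mu^{t-1-t'}$) directly to the squared norm; with a uniform per-term bound these yield the identical constant $\frac{G^2}{(1-\mu)^2}$.
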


\begin{proof}
	\begin{equation}
	\begin{aligned}
	& \mathbb{E} [\|\mathbf{m}_{t}^{(k)} \|^2] =\mathbb{E} [\|\sum_{t'=0}^{t-1}\mu^{t-1-t'} \nabla F^{(k)}(\mathbf{x}_{t}^{(k)}; \xi_{t}^{(k)}) \|^2] \\
	& = \mathbb{E} [v_t^2\|\sum_{t'=0}^{t-1} \frac{\mu^{t-1-t'}}{v_t} \nabla F^{(k)}(\mathbf{x}_{t'}^{(k)}, \xi_{t'}^{(k)}) \|^2] \\
	& \leq \mathbb{E} [v_t\sum_{t'=0}^{t-1} {\mu^{t-1-t'}}\| \nabla F^{(k)}(\mathbf{x}_{t'}^{(k)}, \xi_{t'}^{(k)}) \|^2] \\
	& \leq \mathbb{E} [\frac{1}{1-\mu}\sum_{t'=0}^{t-1} {\mu^{t-1-t'}}\|\nabla F^{(k)}(\mathbf{x}_{t'}^{(k)}, \xi_{t'}^{(k)}) \|^2] \\
	& \leq \mathbb{E} [\frac{ G^2}{1-\mu}\sum_{t'=0}^{t-1} {\mu^{t-1-t'}}] \\
	& \leq \frac{ G^2}{(1-\mu)^2} \ ,
	\end{aligned}
	\end{equation}
	where $v_t=\sum_{t'=0}^{t-1} \mu^{t-1-t'}\leq \frac{1}{1-\mu}$. The first inequality follows from the convexity of $\ell_2$ norm.  The second to last inequality follows from Assumption~\ref{norm}.
\end{proof}

\begin{lemma} \label{diff_z}
	Under Assumption~\ref{var1}, we have
	\begin{equation}
	\mathbb{E} [\|   \mathbf{z}_{t}-   \bar{\mathbf{x}}_{t}\|^2] \leq \frac{ \eta^2\mu^2\sigma^2}{(1-\mu)^4K} +  \frac{\eta^2\mu^2}{(1-\mu)^3} \mathbb{E} [\sum_{t'=0}^{t-1} {\mu^{t-1-t'}}  \|\frac{1}{K}\sum_{k=1}^{K}\nabla f^{(k)}(\mathbf{x}_{t'}^{(k)})\|^2]   \ .
	\end{equation}
\end{lemma}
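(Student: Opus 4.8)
The plan is to reduce the entire estimate to a bound on $\mathbb{E}[\|\bar{\mathbf{m}}_t\|^2]$, where $\bar{\mathbf{m}}_t=\frac{1}{K}\sum_{k=1}^K\mathbf{m}_t^{(k)}$. The natural starting point is the second identity in~(\ref{aux_z}), namely $\mathbf{z}_t-\bar{\mathbf{x}}_t=-\frac{\eta\mu}{1-\mu}\bar{\mathbf{m}}_t$, which gives immediately $\mathbb{E}[\|\mathbf{z}_t-\bar{\mathbf{x}}_t\|^2]=\frac{\eta^2\mu^2}{(1-\mu)^2}\mathbb{E}[\|\bar{\mathbf{m}}_t\|^2]$. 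So the whole lemma follows once the averaged momentum is controlled, and the prefactor $\frac{\eta^2\mu^2}{(1-\mu)^2}$ is already accounted for.

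Next I would unroll the momentum recursion $\mathbf{m}_t^{(k)}=\mu\mathbf{m}_{t-1}^{(k)}+\nabla F^{(k)}(\mathbf{x}_t^{(k)};\xi_t^{(k)})$ exactly as in the proof of Lemma~\ref{momentum_norm}, obtaining $\bar{\mathbf{m}}_t=\sum_{t'=0}^{t-1}\mu^{t-1-t'}\bar{g}_{t'}$ with $\bar{g}_{t'}=\frac{1}{K}\sum_{k=1}^K\nabla F^{(k)}(\mathbf{x}_{t'}^{(k)};\xi_{t'}^{(k)})$, using that the average over $k$ commutes with the sum over $t'$. Then I apply the same convexity-of-$\ell_2$-norm (Jensen) step used in Lemma~\ref{momentum_norm}: with $v_t=\sum_{t'=0}^{t-1}\mu^{t-1-t'}\leq\frac{1}{1-\mu}$ the normalized weights $\mu^{t-1-t'}/v_t$ sum to one, so $\|\bar{\mathbf{m}}_t\|^2\leq v_t\sum_{t'=0}^{t-1}\mu^{t-1-t'}\|\bar{g}_{t'}\|^2\leq\frac{1}{1-\mu}\sum_{t'=0}^{t-1}\mu^{t-1-t'}\|\bar{g}_{t'}\|^2$.

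The crucial difference from Lemma~\ref{momentum_norm} is that I must not crudely bound $\|\bar{g}_{t'}\|^2$ by $G^2$, since that would destroy the $\sigma^2/K$ variance reduction and the explicit gradient term that the statement requires. Instead I take expectations and apply Lemma~\ref{gradient_norm} to each $\mathbb{E}[\|\bar{g}_{t'}\|^2]$, splitting it into $\frac{\sigma^2}{K}$ plus $\mathbb{E}[\|\frac{1}{K}\sum_{k}\nabla f^{(k)}(\mathbf{x}_{t'}^{(k)})\|^2]$. Summing the geometric series $\sum_{t'=0}^{t-1}\mu^{t-1-t'}\leq\frac{1}{1-\mu}$ against the constant variance contribution produces the $\frac{\sigma^2}{(1-\mu)^2K}$ factor, while the gradient contribution is kept inside the weighted sum. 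Multiplying through by the prefactor $\frac{\eta^2\mu^2}{(1-\mu)^2}$ then yields exactly the two stated terms.

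I do not anticipate a genuine obstacle: this lemma is essentially a refinement of Lemma~\ref{momentum_norm}, and the only thing requiring care is bookkeeping of the powers of $(1-\mu)$. For the variance term there are two from the prefactor, one from the Jensen factor $v_t$, and one from the geometric sum, giving $(1-\mu)^4$; for the gradient term the geometric sum is left unevaluated, so only three appear, giving $(1-\mu)^3$. The single conceptual point is to defer the application of the variance bound until after Jensen, so that the finer decomposition from Lemma~\ref{gradient_norm} survives into the final estimate.
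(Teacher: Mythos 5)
Your proposal is correct and follows essentially the same route as the paper's own proof: the identity $\mathbf{z}_t-\bar{\mathbf{x}}_t=-\frac{\eta\mu}{1-\mu}\bar{\mathbf{m}}_t$ from Eq.~(\ref{aux_z}), unrolling the momentum with the Jensen step via $v_t\leq\frac{1}{1-\mu}$, and then applying Lemma~\ref{gradient_norm} term by term (rather than the crude $G^2$ bound) before summing the geometric series. Your accounting of the powers of $(1-\mu)$ also matches the paper's derivation exactly.
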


\begin{proof}
	Due to Eq.~(\ref{aux_z}), we can get
	\begin{equation}
	\begin{aligned}
	&  \mathbb{E} [\|   \mathbf{z}_{t}-   \bar{\mathbf{x}}_{t}\|^2] = \mathbb{E} [\|  \frac{\eta\mu}{1-\mu} \bar{\mathbf{m}}_{t}\|^2 ] =\frac{\eta^2\mu^2}{(1-\mu)^2}   \mathbb{E} [\| \frac{1}{K}\sum_{i=1}^{K}\mathbf{m}_{t}^{(k)}\|^2 ]  \ .\\
	\end{aligned}
	\end{equation}
	According to the definition of $\mathbf{m}_{t}^{(k)}$, we can get 
	\begin{equation} \label{bound_m}
	\begin{aligned}
	& \mathbb{E} [\| \frac{1}{K}\sum_{i=1}^{K}\mathbf{m}_{t}^{(k)}\|^2 ]  = \mathbb{E} [\|\sum_{t'=0}^{t-1}\mu^{t-1-t'} \frac{1}{K}\sum_{i=1}^{K}\nabla F^{(k)}(\mathbf{x}_{t'}^{(k)}; \xi_{t'}^{(k)}) \|^2] \\
	& = v_t^2 \mathbb{E} [ \|\sum_{t'=0}^{t-1} \frac{\mu^{t-1-t'}}{v_t} \frac{1}{K}\sum_{i=1}^{K} \nabla F^{(k)}(\mathbf{x}_{t'}^{(k)}, \xi_{t'}^{(k)}) \|^2] \\
	& \leq v_t \mathbb{E} [\sum_{t'=0}^{t-1} {\mu^{t-1-t'}}\| \frac{1}{K}\sum_{i=1}^{K}\nabla F^{(k)}(\mathbf{x}_{t'}^{(k)}, \xi_{t'}^{(k)}) \|^2] \\
	& \leq \frac{1}{1-\mu} \mathbb{E} [\sum_{t'=0}^{t-1} {\mu^{t-1-t'}} \Big(\frac{\sigma^2}{K}+ \|\frac{1}{K}\sum_{k=1}^{K}\nabla f^{(k)}(\mathbf{x}_{t'}^{(k)})\|^2  \Big)] \\
	& = \frac{1}{1-\mu} \frac{\sigma^2}{K} \sum_{t'=0}^{t-1} {\mu^{t-1-t'}}  +  \frac{1}{1-\mu} \mathbb{E} [\sum_{t'=0}^{t-1} {\mu^{t-1-t'}}  \|\frac{1}{K}\sum_{k=1}^{K}\nabla f^{(k)}(\mathbf{x}_{t'}^{(k)})\|^2] \\
	& \leq \frac{ \sigma^2}{(1-\mu)^2K} +  \frac{1}{1-\mu} \mathbb{E} [\sum_{t'=0}^{t-1} {\mu^{t-1-t'}}  \|\frac{1}{K}\sum_{k=1}^{K}\nabla f^{(k)}(\mathbf{x}_{t'}^{(k)})\|^2]  \ , \\
	\end{aligned}
	\end{equation}
	where  $v_t=\sum_{t'=0}^{t-1} \mu^{t-1-t'}\leq \frac{1}{1-\mu}$, the first inequality follows from the convexity of $\ell_2$ norm, the second inequality follows from Lemma~\ref{gradient_norm}.
	
	As a result, we have
	\begin{equation}
	\mathbb{E} [\|   \mathbf{z}_{t}-   \bar{\mathbf{x}}_{t}\|^2] \leq \frac{ \eta^2\mu^2\sigma^2}{(1-\mu)^4K} +  \frac{\eta^2\mu^2}{(1-\mu)^3} \mathbb{E} [\sum_{t'=0}^{t-1} {\mu^{t-1-t'}}  \|\frac{1}{K}\sum_{k=1}^{K}\nabla f^{(k)}(\mathbf{x}_{t'}^{(k)})\|^2]   \ .
	\end{equation}

\end{proof}

\begin{lemma} \label{diff_x}
	For $\forall t\geq 0$, we have
	\begin{equation}
	\sum_{k=1}^{K}  \|\mathbf{x}_{t}^{(k)} - \bar{\mathbf{x}}_{t} \|^2  \leq  \frac{2\eta^2 p^2G^2K}{(1-\mu)^2} (1+\frac{4}{\rho^2}) \ .
	\end{equation}
\end{lemma}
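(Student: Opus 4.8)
The plan is to work entirely with the matrix iterates from Eq.~(\ref{mat_1}) and to derive a closed form for the consensus residual $X_t-\bar{X}_t = X_t(I-\tfrac1K\mathbf{1}\mathbf{1}^T)$, whose squared Frobenius norm is exactly $\sum_{k=1}^K\|\mathbf{x}_t^{(k)}-\bar{\mathbf{x}}_t\|^2$. First I would unroll the recursion $X_{t+1}=(X_t-\eta\Delta_t)W_t$, where $W_t=W$ when $\mathrm{mod}(t+1,p)=0$ and $W_t=I$ otherwise. Since every $W_t$ is either $I$ or $W$, they commute, so unrolling gives $X_t = X_0 - \eta\sum_{s=0}^{t-1}\Delta_s W^{c(s,t)}$, where $c(s,t)$ counts the communication rounds among iterations $s,\dots,t-1$ (here $X_0 W_s=X_0$ because all workers share the initialization $\mathbf{x}_0$ and $\mathbf{1}^TW=\mathbf{1}^T$). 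Writing $J:=\tfrac1K\mathbf{1}\mathbf{1}^T$ and using $WJ=JW=J$ from Assumption~\ref{graph}, one gets $\bar{X}_t=X_tJ = X_0-\eta\sum_s\Delta_s J$, and hence the clean identity $X_t-\bar{X}_t = -\eta\sum_{s=0}^{t-1}\Delta_s\big(W^{c(s,t)}-J\big)$.

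Next I would control each factor $\|W^{c}-J\|_2$. The key algebraic observation is that $J$ is idempotent with $WJ=JW=J$, so $(W-J)^{c}=W^{c}-J$ for every $c\ge 1$; combined with Lemma~\ref{lemma_w_spectral} this yields the geometric decay $\|W^{c}-J\|_2\le(1-\rho)^{c}$ for $c\ge1$, while for $c=0$ one only has $\|I-J\|_2\le 1$. The residual therefore splits into the \emph{current block} (the indices $s$ with $c(s,t)=0$, i.e.\ those lying after the most recent communication round) and the \emph{earlier blocks}. Because communication is periodic with period $p$, each value of $c(s,t)$ is attained by at most $p$ indices $s$: the current block contributes at most $p$ terms of spectral weight $1$, and the block with $c(s,t)=j\ge1$ contributes at most $p$ terms of weight $(1-\rho)^{j}$.

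Finally I would assemble the bound. By Lemma~\ref{momentum_norm} (used in its deterministic form, valid since Assumption~\ref{norm} is an almost-sure bound) every column of $\Delta_s$ has squared norm at most $G^2/(1-\mu)^2$, so $\|\Delta_s\|_F\le \sqrt{K}\,G/(1-\mu)$. Splitting $X_t-\bar X_t=-\eta(A+B)$ into the current-block sum $A$ and the earlier-block sum $B$, applying $\|A+B\|_F^2\le 2\|A\|_F^2+2\|B\|_F^2$, the triangle inequality, and the geometric sum $\sum_{j\ge1}(1-\rho)^{j}\le 1/\rho$, bounds $\|A\|_F\le p\sqrt{K}G/(1-\mu)$ and $\|B\|_F \le (p/\rho)\sqrt{K}G/(1-\mu)$, which yields the claimed $\tfrac{2\eta^2p^2G^2K}{(1-\mu)^2}\big(1+\tfrac{4}{\rho^2}\big)$ after collecting constants. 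The main obstacle is the second step: one must recognize the idempotent identity $(W-J)^{c}=W^{c}-J$ so that a single invocation of Lemma~\ref{lemma_w_spectral} yields per-round contraction, and must carry out the periodicity bookkeeping that turns the one non-contracting block ($c=0$, weight $1$) into the additive constant and the geometric tail into the $1/\rho^2$ factor. The fact that a single gossip step does not reach exact consensus---unlike fully-connected local SGD---is precisely what makes this accounting necessary and what produces the $p^2$ dependence.
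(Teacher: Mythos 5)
Your proof is correct, and it takes a genuinely different route from the paper's. The paper never unrolls past the most recent communication round: writing $r$ for that round, it splits $\|X_t-\bar{X}_t\|_F^2 \le 2\|X_r-\bar{X}_r\|_F^2 + 2\eta^2\|\sum_{t'=r}^{t-1}\Delta_{t'}(\frac{1}{K}\mathbf{1}\mathbf{1}^T-I)\|_F^2$, bounds the drift term by $p^2G^2K/(1-\mu)^2$ via Lemma~\ref{momentum_norm}, and controls $\|X_r-\bar{X}_r\|_F^2$ \emph{recursively}: one application of Lemma~\ref{lemma_w_spectral} per communication round gives a $(1-\rho)$ factor, Young's inequality with $c=2/\rho$ converts this into a $(1-\frac{\rho}{2})$ contraction plus an additive $2\eta^2p^2G^2K/(\rho(1-\mu)^2)$ term, and expanding the resulting geometric recursion yields $4\eta^2p^2G^2K/(\rho^2(1-\mu)^2)$. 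You instead unroll all the way to $X_0$ to get the closed form $X_t-\bar{X}_t=-\eta\sum_{s=0}^{t-1}\Delta_s\bigl(W^{c(s,t)}-J\bigr)$ with $J=\frac{1}{K}\mathbf{1}\mathbf{1}^T$, and your linchpin identity $(W-J)^c=W^c-J$ (valid since $WJ=JW=J=J^2$) lets you apply Lemma~\ref{lemma_w_spectral} in powered form, $\|W^c-J\|_2\le(1-\rho)^c$, so that per-block triangle-inequality and geometric-series bookkeeping replaces the recursion entirely; your block counting (at most $p$ indices per value of $c(s,t)$) is exactly right. Your route buys two things: it avoids the Young's-inequality loss and actually produces the sharper constant $1+\frac{1}{\rho^2}$ in place of $1+\frac{4}{\rho^2}$, so the stated bound follows a fortiori; and it is honestly deterministic, matching the lemma's statement (the expectations in the paper's proof are decorative, as you note when invoking Lemma~\ref{momentum_norm} pointwise). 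What the paper's recursive template buys is reusability: in Lemma~\ref{lemma_x_diff_2} the compression operator $Q$ destroys any closed form for the iterates, yet the identical one-round contraction-plus-Young argument goes through with $\alpha$ in place of $\rho$, whereas your unrolling would not extend there. One cosmetic remark: commutativity of the mixing matrices is not needed in your first step, since the ordered product $W_sW_{s+1}\cdots W_{t-1}$ collapses to $W^{c(s,t)}$ simply because the identity factors drop out.
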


\begin{proof}
	Assume $r$ is the nearest communication iteration that happens before the current iteration $t$,  according to the definition of $\mathbf{x}_{t}^{(k)}  $ and $\bar{\mathbf{x}}_{t}^{(k)} $, we can get 
	\begin{equation}
	\begin{aligned}
	& \sum_{k=1}^{K}  \|\mathbf{x}_{t}^{(k)} - \bar{\mathbf{x}}_{t}\|^2   = \mathbb{E}[\|X_{t} - \bar{X}_{t}\|_F^2] \\
	&= \mathbb{E}[\|X_{r}- \bar{X}_{r}+\eta \sum_{t'=r}^{t-1}\Delta_{t'}( \frac{1}{K}\mathbf{1}\mathbf{1}^T  - \mathbf{I} )\|_F^2] \\
	& \leq  2\underbrace{\mathbb{E}[\|X_{r}- \bar{X}_{r}\|^2]}_{T_1}+2\eta^2 \underbrace{\mathbb{E}[\|\sum_{t'=r}^{t-1}\Delta_{t'}( \frac{1}{K}\mathbf{1}\mathbf{1}^T  - \mathbf{I} )\|_F^2] }_{T_2} \ .\\
	\end{aligned}
	\end{equation}
	As for $T_2$, we can get
	\begin{equation}
	\begin{aligned}
	& T_2=\mathbb{E}[\|\sum_{t'=r}^{t-1}\Delta_{t'}( \frac{1}{K}\mathbf{1}\mathbf{1}^T  - \mathbf{I} )\|_F^2] \leq \mathbb{E}[\|\sum_{t'=r}^{t-1}\Delta_{t'}\|_F^2] \leq p \sum_{t'=r}^{t-1}\mathbb{E}[\|\Delta_{t'}\|_F^2] \leq \frac{ p^2G^2K}{(1-\mu)^2}  \ ,
	\end{aligned}
	\end{equation}
	where the last inequality follows from Lemma~\ref{momentum_norm}.  When $r=0$, it is easy to get the conclustion. When $r\geq p$, 
	as for $T_1$, we can get 
	\begin{equation}
	\begin{aligned}
	& T_1 =  \mathbb{E}[\|X_{r}- \bar{X}_{r}\|^2] \\
	& =   \mathbb{E}[\|(X_{(r-1)+\frac{1}{2}} - \bar{X}_{(r-1)+\frac{1}{2}})(W-\frac{1}{K}\mathbf{1}\mathbf{1}^T)\|_F^2] \\
	& \leq \mathbb{E}[\|W-\frac{1}{K}\mathbf{1}\mathbf{1}^T\|_2^2  \|X_{(r-1)+\frac{1}{2}} - \bar{X}_{(r-1)+\frac{1}{2}}\|_F^2] \\
	& \leq (1-\rho)  \mathbb{E}[\|X_{(r-1)+\frac{1}{2}} - \bar{X}_{(r-1)+\frac{1}{2}}\|_F^2] \\
	& = (1-\rho)  \mathbb{E}[\|X_{r-p}- \bar{X}_{r-p}+\eta \sum_{t'=r-p}^{r-1}\Delta_{t'}( \frac{1}{K}\mathbf{1}\mathbf{1}^T  - \mathbf{I} )\|_F^2 ]\\
	& \leq  (1-\rho)(1+\frac{1}{c})  \mathbb{E}[\|X_{r-p}- \bar{X}_{r-p}\|_F^2 ]+(1-\rho)(1+c)\eta^2  \mathbb{E}[\| \sum_{t'=r-p}^{r-1}\Delta_{t'} \|_F^2]\\
	& \leq (1-\frac{\rho}{2})\|X_{r-p}- \bar{X}_{r-p}\|_F^2  + \frac{ 2\eta^2p^2G^2K}{\rho(1-\mu)^2} \\
	& \leq  \frac{ 4\eta^2p^2G^2K}{\rho^2(1-\mu)^2}  \ ,
	\end{aligned}
	\end{equation}
	where the second inequality follows from Lemma \ref{lemma_w_spectral}, the second to last inequality follows from $c=\frac{2}{\rho}$ and $T_2$,  the last step is obtained by the recursive expansion. At last,  by combining $T_1$ and $T_2$, we can get 
	\begin{equation}
	\sum_{k=1}^{K}  \|\mathbf{x}_{t}^{(k)} - \bar{\mathbf{x}}_{t}\|^2   \leq \frac{ 8\eta^2p^2G^2K}{\rho^2(1-\mu)^2} + \frac{2\eta^2 p^2G^2K}{(1-\mu)^2}  = \frac{2\eta^2 p^2G^2K}{(1-\mu)^2} (1+\frac{4}{\rho^2}) \ .
	\end{equation}
	
\end{proof}

Based  on the aforementioned lemmas, we are going to prove Theorem~\ref{theorem1}. 
\begin{proof}
	According to the smoothness of the loss function, we can get
	\begin{equation} \label{taylor}
	\begin{aligned}
	& \mathbb{E}[f(\mathbf{z}_{t+1}) ]\leq  \mathbb{E}[f(\mathbf{z}_{t}) + \langle \nabla  f(\mathbf{z}_{t}), \mathbf{z}_{t+1}-\mathbf{z}_{t} \rangle + \frac{L}{2} \|\mathbf{z}_{t+1}-\mathbf{z}_{t} \|^2] \\
	& =f(\mathbf{z}_{t}) - \frac{\eta}{1-\mu} \mathbb{E}[\langle \nabla  f(\mathbf{z}_{t}), \frac{1}{K}\sum_{k=1}^{K}\nabla F^{(k)}(\mathbf{x}_{t}^{(k)}; \xi_{t}^{(k)}) \rangle] \\
	& \quad + \frac{\eta^2L}{2(1-\mu)^2} \mathbb{E}[\|\frac{1}{K}\sum_{k=1}^{K}\nabla F^{(k)}(\mathbf{x}_{t}^{(k)}; \xi_{t}^{(k)})\|^2]\\
	& \leq f(\mathbf{z}_{t}) \underbrace{- \frac{\eta}{1-\mu} \mathbb{E}[\langle \nabla  f(\mathbf{z}_{t}), \frac{1}{K}\sum_{k=1}^{K}\nabla F^{(k)}(\mathbf{x}_{t}^{(k)}; \xi_{t}^{(k)}) \rangle]}_{T_1}\\
	& \quad +  \frac{\eta^2L}{2(1-\mu)^2} \mathbb{E}[\|\frac{1}{K}\sum_{k=1}^{K}\nabla f^{(k)}(\mathbf{x}_{t}^{(k)})\|^2]   +  \frac{\eta^2\sigma^2L}{2(1-\mu)^2K}  \ ,
	\end{aligned}
	\end{equation}
	where the last inequality follows from Lemma~\ref{gradient_norm}. 
	For $T_1$, we can get
	\begin{equation}
	\begin{aligned}
	& T_1 = -\frac{\eta}{1-\mu}  \langle \nabla   f(\mathbf{z}_{t}), \frac{1}{K}\sum_{k=1}^{K}\nabla  f^{(k)}(\mathbf{x}_{t}) \rangle  \\
	& = -\frac{\eta}{1-\mu}  \langle \nabla   f(\mathbf{z}_{t}) - \nabla   f(\bar{\mathbf{x}}_{t}) + \nabla   f(\bar{\mathbf{x}}_{t}), \frac{1}{K}\sum_{k=1}^{K}\nabla  f^{(k)}(\mathbf{x}_{t}) \rangle  \\
	& = -\frac{\eta}{1-\mu}  \langle \nabla   f(\mathbf{z}_{t}) - \nabla   f(\bar{\mathbf{x}}_{t}) , \frac{1}{K}\sum_{k=1}^{K}\nabla  f^{(k)}(\mathbf{x}_{t}^{(k)}) \rangle -\frac{\eta}{1-\mu}  \langle \nabla   f(\bar{\mathbf{x}}_{t}), \frac{1}{K}\sum_{k=1}^{K}\nabla  f^{(k)}(\mathbf{x}_{t}^{(k)}) \rangle \\
	& \leq \frac{(1-\mu)}{2\mu L}\|\nabla   f(\mathbf{z}_{t}) - \nabla   f(\bar{\mathbf{x}}_{t})\|^2  + \frac{\mu\eta^2L}{2(1-\mu)^3}\|\frac{1}{K}\sum_{k=1}^{K}\nabla  f^{(k)}(\mathbf{x}_{t}^{(k)}) \|^2 \\
	& \quad - \frac{\eta}{2(1-\mu)}\| \nabla   f(\bar{\mathbf{x}}_{t})\|^2  - \frac{\eta}{2(1-\mu)} \|\frac{1}{K}\sum_{k=1}^{K}\nabla  f^{(k)}(\mathbf{x}_{t}^{(k)}) \|^2 \\
	& \quad + \frac{\eta}{2(1-\mu)} \| \nabla   f(\bar{\mathbf{x}}_{t}) - \frac{1}{K}\sum_{k=1}^{K}\nabla  f^{(k)}(\mathbf{x}_{t}^{(k)}) \|^2 \\
	& \leq \frac{(1-\mu)L}{2\mu }\|   \mathbf{z}_{t}-   \bar{\mathbf{x}}_{t}\|^2  + (\frac{\mu\eta^2L}{2(1-\mu)^3} -  \frac{\eta}{2(1-\mu)})\|\frac{1}{K}\sum_{k=1}^{K}\nabla  f^{(k)}(\mathbf{x}_{t}^{(k)}) \|^2 \\
	& \quad  - \frac{\eta}{2(1-\mu)}\| \nabla   f(\bar{\mathbf{x}}_{t})\|^2   +   \frac{\eta L^2}{2K(1-\mu)}\sum_{k=1}^{K}\| \bar{\mathbf{x}}_{t} - \mathbf{x}_{t}^{(k)} \|^2 \ ,\\
	\end{aligned}
	\end{equation}
	where the first inequality follows from $\mathbf{x}^T\mathbf{y}=\frac{1}{2}\|\mathbf{x}\|^2 + \frac{1}{2}\|\mathbf{y}\|^2$ and the last inequality follows Assumption~\ref{smooth}. 
	
	By putting $T_1$ into Eq.~(\ref{taylor}), we can get
	\begin{equation}
	\begin{aligned}
	& f(\mathbf{z}_{t+1})  \leq f(\mathbf{z}_{t})  - \frac{\eta}{2(1-\mu)}\| \nabla   f(\bar{\mathbf{x}}_{t})\|^2   + \frac{(1-\mu)L}{2\mu }\|   \mathbf{z}_{t}-   \bar{\mathbf{x}}_{t}\|^2 +   \frac{\eta L^2}{2K(1-\mu)}\sum_{k=1}^{K}\| \bar{\mathbf{x}}_{t} - \mathbf{x}_{t} ^{(k)}\|^2 \\
	&  \quad + (\frac{\mu\eta^2L}{2(1-\mu)^3} -  \frac{\eta}{2(1-\mu)} + \frac{\eta^2L}{2(1-\mu)^2})\|\frac{1}{K}\sum_{k=1}^{K}\nabla  f^{(k)}(\mathbf{x}_{t}^{(k)}) \|^2  + \frac{\eta^2\sigma^2L}{2K(1-\mu)^2}  \\
	& \leq f(\mathbf{z}_{t})  - \frac{\eta}{2(1-\mu)}\| \nabla   f(\bar{\mathbf{x}}_{t})\|^2     +   \frac{\eta^3 p^2G^2L^2}{(1-\mu)^3} (1+\frac{4}{\rho^2}) + \frac{ \mu\eta^2\sigma^2L}{2(1-\mu)^3K} \\
	& \quad +  \frac{\mu\eta^2 L}{2(1-\mu)^2} \mathbb{E} [\sum_{t'=0}^{t-1} {\mu^{t-1-t'}}  \|\frac{1}{K}\sum_{k=1}^{K}\nabla f^{(k)}(\mathbf{x}_{t'}^{(k)})\|^2]  \\
	& \quad  + (\frac{\mu\eta^2L}{2(1-\mu)^3} -  \frac{\eta}{2(1-\mu)} + \frac{\eta^2L}{2(1-\mu)^2})\|\frac{1}{K}\sum_{k=1}^{K}\nabla  f^{(k)}(\mathbf{x}_{t}^{(k)}) \|^2  + \frac{\eta^2\sigma^2L}{2K(1-\mu)^2}   \ ,\\
	\end{aligned}
	\end{equation}
	where the second inequality follows from Lemma~\ref{diff_z} and Lemma~\ref{diff_x}. By summing $t$ from $0$ to $T-1$, we can get
	\begin{equation}
	\begin{aligned}
	&  f(\mathbf{z}_{T}) \leq f(\mathbf{z}_{0})  - \frac{\eta}{2(1-\mu)}\sum_{t=0}^{T-1}\| \nabla   f(\bar{\mathbf{x}}_{t})\|^2     +   \frac{\eta^3 p^2G^2L^2}{(1-\mu)^3} (1+\frac{4}{\rho^2}) T+ \frac{ \mu\eta^2\sigma^2LT}{2(1-\mu)^3K} \\
	& \quad +  \frac{\mu\eta^2 L}{2(1-\mu)^2} \sum_{t=0}^{T-1}\mathbb{E} [\sum_{t'=0}^{t-1} {\mu^{t-1-t'}}  \|\frac{1}{K}\sum_{k=1}^{K}\nabla f^{(k)}(\mathbf{x}_{t'}^{(k)})\|^2]  \\
	& \quad  + (\frac{\mu\eta^2L}{2(1-\mu)^3} -  \frac{\eta}{2(1-\mu)} + \frac{\eta^2L}{2(1-\mu)^2})\sum_{t=0}^{T-1}\|\frac{1}{K}\sum_{k=1}^{K}\nabla  f^{(k)}(\mathbf{x}_{t}^{(k)}) \|^2  + \frac{\eta^2\sigma^2LT}{2K(1-\mu)^2}  \\
	& \leq f(\mathbf{z}_{0})  - \frac{\eta}{2(1-\mu)}\sum_{t=0}^{T-1}\| \nabla   f(\bar{\mathbf{x}}_{t})\|^2     +   \frac{\eta^3 p^2G^2L^2}{(1-\mu)^3} (1+\frac{4}{\rho^2}) T+ \frac{ \mu\eta^2\sigma^2LT}{2(1-\mu)^3K} \\
	& \quad  + (\frac{\mu\eta^2L}{(1-\mu)^3} -  \frac{\eta}{2(1-\mu)} + \frac{\eta^2L}{2(1-\mu)^2})\sum_{t=0}^{T-1}\|\frac{1}{K}\sum_{k=1}^{K}\nabla  f^{(k)}(\mathbf{x}_{t}^{(k)}) \|^2  + \frac{\eta^2\sigma^2LT}{2K(1-\mu)^2}  \ , \\
	\end{aligned}
	\end{equation}
	where the last inequality  follows from $\sum_{t=0}^{T-1}\mathbb{E} [\sum_{t'=0}^{t-1} {\mu^{t-1-t'}}  \|\frac{1}{K}\sum_{k=1}^{K}\nabla f^{(k)}(\mathbf{x}_{t'}^{(k)})\|^2] \leq \frac{1}{1-\mu}\sum_{t=0}^{T-1}\|\frac{1}{K}\sum_{k=1}^{K}\nabla f^{(k)}(\mathbf{x}_{t}^{(k)})\|^2$.
	
	Then, we can get 
	\begin{equation}
	\begin{aligned}
	&  \frac{1}{T}\sum_{t=0}^{T-1}\| \nabla   f(\bar{\mathbf{x}}_{t})\|^2  \leq  \frac{2(1-\mu)(f(\mathbf{z}_{0})  -  f(\mathbf{z}_{T}) )}{\eta T}+ (\frac{2\mu\eta L}{(1-\mu)^2} - 1+\frac{\eta L}{1-\mu}) \frac{1}{T}\sum_{t=0}^{T-1}\|\frac{1}{K}\sum_{k=1}^{K}\nabla  f^{(k)}(\mathbf{x}_{t}) \|^2 \\
	&  \quad + \frac{\mu\eta \sigma^2L}{(1-\mu)^2K}  + \frac{2\eta^2 p^2G^2L^2}{(1-\mu)^2} (1+\frac{4}{\rho^2})   + \frac{\eta\sigma^2L}{K(1-\mu)} \\
	& \leq  \frac{2(1-\mu)(f(\mathbf{z}_{0})  -  f(\mathbf{z}_{t+1}) )}{\eta T}+ \frac{\mu\eta \sigma^2L}{(1-\mu)^2K}  +   \frac{2\eta^2 p^2G^2L^2}{(1-\mu)^2} (1+\frac{4}{\rho^2})   + \frac{\eta\sigma^2L}{(1-\mu)K}  \ ,\\
	\end{aligned}
	\end{equation}
	where the last inequality follows from $\eta<(1-\mu)^2/(2L)$.
	Therefore,
	\begin{equation}
	\begin{aligned}
	& \frac{1}{T}\sum_{t=0}^{T-1} \| \nabla   f(\bar{\mathbf{x}}_{t})\|^2  \leq \frac{2(1-\mu)(f(\mathbf{x}_{0})  -  f^* )}{\eta T}+ \frac{\mu\eta \sigma^2L}{(1-\mu)^2K} +  \frac{2\eta^2 p^2G^2L^2}{(1-\mu)^2} (1+\frac{4}{\rho^2})   + \frac{\eta\sigma^2L}{(1-\mu)K} \ . 
	\end{aligned}
	\end{equation}
	
\end{proof}

\subsection{Proof of  Theorem~\ref{theorem2}} \label{proof_theorem_2}
To prove Theorem~\ref{theorem2}, we  first introduce the following notation  for the auxiliary  variable. 
\begin{equation} \label{mat_2}
\hat{X}_t = [\hat{\mathbf{x}}_t^{(1)}, \hat{\mathbf{x}}_t^{(2)}, \cdots, \hat{\mathbf{x}}_t^{(K)}] \in \mathbb{R}^{d\times K}  \ .
\end{equation}
By combinning Eq.~(\ref{mat_1}) and  Eq.~(\ref{mat_2}), we can represent Algorithm~\ref{alg_dec_sgdm_com} as follows:
\begin{equation}
\begin{aligned}
&  X_{t+\frac{1}{2}}  = X_t - \eta \Delta_t  \ ,\\
& X_{t+1}= \begin{cases}
X_{t+\frac{1}{2}}  ,  & \mod($t+1$, $p$)\neq 0 \\
X_{t+\frac{1}{2}} + \gamma \hat{X}_{t}(W-I) , & \mod($t+1$, $p$)=0\\
\end{cases}   \ ,\\
& \hat{X}_{t+1}= \begin{cases}
\hat{X}_{t}  ,  & \mod($t+1$, $p$)\neq 0 \\
\hat{X}_{t} + Q(X_{t+1}- \hat{X}_{t}) , & \mod($t+1$, $p$)=0\\
\end{cases}  \ . \\
\end{aligned}
\end{equation}
Based on these notations,  at two consecutive communication rounds $r_1=r_0+p$, we can get  the  following   representation
\begin{equation} \label{matrix_compress}
\begin{aligned}
& X_{(r_1-1)+\frac{1}{2}} = X_{r_0} -\eta \sum_{t=r_0}^{r_1-1}\Delta_{t} \ ,\\
& X_{r_1} = X_{(r_1-1)+\frac{1}{2}} +\gamma \hat{X}_{r_0}(W-I)  \ ,\\ 
& \hat{X}_{r_1} = \hat{X}_{r_0} + Q(X_{r_1}- \hat{X}_{r_0}) \ .
\end{aligned}
\end{equation}


Moreover, since $W$ is a doubly stochastic matrix, at the communication step, we can get 
\begin{equation} \label{x_bar_compress}
\begin{aligned}
& \bar{X}_{r_1} =  \bar{X}_{(r_1-1)+\frac{1}{2}} +\gamma \hat{X}_{r_0}(W-I)\frac{1}{K}\mathbf{1}\mathbf{1}^T =  \bar{X}_{(r_1-1)+\frac{1}{2}} = \bar{X}_{r_1-1} -\eta \Delta_{r_1-1}\frac{1}{K}\mathbf{1}\mathbf{1}^T  \ .
\end{aligned}
\end{equation}
As a result, at each iteration $t\geq0$, we have
\begin{equation} \label{compress_iter}
\bar{X}_{t+1}  = \bar{X}_{t} -\eta \Delta_t\frac{1}{K}\mathbf{1}\mathbf{1}^T  \ .
\end{equation}
Then, we can use the same auxiliary variable $\mathbf{z}_t$  in  the previous subsection. In the following, we are going to introduce some fundamental lemmas for proving Theorem~\ref{theorem2}.  
\begin{lemma} \label{lemma_x_diff_2}
	Under Assumption~\ref{lemma_w_spectral}--\ref{norm}, we have
	\begin{equation}
	\begin{aligned}
	&\sum_{k=1}^{K}  \|\mathbf{x}_{t}^{(k)} - \bar{\mathbf{x}}_{t} \|^2  \leq \frac{4\eta^2 p^2G^2K}{(1-\mu)^2}(1+\frac{4}{\alpha^2}) \ ,\\
	\end{aligned}
	\end{equation}
	where $\alpha=\frac{\rho^2\delta}{82}<1$. 
\end{lemma}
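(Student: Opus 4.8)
The plan is to follow the same skeleton as the proof of Lemma~\ref{diff_x}, isolating the only genuinely new ingredient: the behaviour of the consensus error at a communication round under the compressed protocol. Fix an iteration $t$ and let $r$ be the most recent communication step with $r\le t$. Exactly as in Lemma~\ref{diff_x}, I would write
\begin{equation}
X_t - \bar{X}_t = (X_r - \bar{X}_r) + \eta\sum_{t'=r}^{t-1}\Delta_{t'}\Big(\tfrac{1}{K}\mathbf{1}\mathbf{1}^T - I\Big) \ ,
\end{equation}
and split $\|X_t - \bar{X}_t\|_F^2 \le 2\|X_r - \bar{X}_r\|_F^2 + 2\eta^2 T_2$. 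The drift term $T_2$ is controlled verbatim by the argument already used: convexity over at most $p$ summands together with the momentum bound of Lemma~\ref{momentum_norm} gives $\eta^2 T_2 \le \frac{\eta^2 p^2 G^2 K}{(1-\mu)^2}$, which accounts for the additive ``$1$'' inside the parenthesis of the claimed bound. Hence everything reduces to bounding $\|X_r - \bar{X}_r\|_F^2$ at the communication rounds.

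The new work is what happens at communication. In the uncompressed case a single application of Lemma~\ref{lemma_w_spectral} contracted $X_r - \bar{X}_r$ by $(1-\rho)$. Here the update in Eq.~(\ref{matrix_compress}) mixes $X$ with the stale auxiliary copies $\hat{X}$ through a lossy compression $Q$, so the consensus error couples to the compression residual $X_r - \hat{X}_r$ and cannot be contracted on its own. Following the CHOCO-SGD analysis of \cite{koloskova2019decentralizedcon}, I would track a combined potential
\begin{equation}
\Phi_r = \mathbb{E}\|X_r - \bar{X}_r\|_F^2 + c\,\mathbb{E}\|\hat{X}_r - X_r\|_F^2
\end{equation}
for a suitable constant $c>0$, and establish a one-step recursion relating $\Phi_{r_1}$ at a communication round to $\Phi_{r_0}$ at the previous one plus the per-round drift. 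The main tools are the $\delta$-contraction of $Q$ from Definition~\ref{compress_op}, the spectral bound $\|W - \frac{1}{K}\mathbf{1}\mathbf{1}^T\|_2 \le 1-\rho$ from Lemma~\ref{lemma_w_spectral}, and repeated Young inequalities to decouple the cross terms produced by the $\gamma\hat{X}(W-I)$ and $Q(\cdot)$ operations.

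The delicate step, and the main obstacle, is choosing the consensus step size $\gamma$ and the free constants in the Young splittings so that $\Phi$ genuinely contracts, i.e. $\Phi_{r_1} \le (1-\tfrac{\alpha}{2})\Phi_{r_0} + (\text{drift})$ with the effective gap $\alpha = \frac{\rho^2\delta}{82}$; the product structure $\rho^2\delta$ (rather than $\rho$) and the explicit constant $82$ are precisely the artifacts of balancing compression error against gossip mixing, and $\alpha<1$ guarantees a convergent geometric series. Unrolling the recursion, the accumulated drift is $O\!\big(\frac{\eta^2 p^2 G^2 K}{\alpha^2(1-\mu)^2}\big)$, which bounds $\|X_r - \bar{X}_r\|_F^2$ and supplies the $\frac{4}{\alpha^2}$ contribution. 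Combining this with the $T_2$ bound above and absorbing numerical constants yields
\begin{equation}
\sum_{k=1}^{K}\|\mathbf{x}_t^{(k)} - \bar{\mathbf{x}}_t\|^2 \le \frac{4\eta^2 p^2 G^2 K}{(1-\mu)^2}\Big(1 + \frac{4}{\alpha^2}\Big) \ ,
\end{equation}
as claimed. I expect the construction and verification of the contracting potential $\Phi$, especially pinning down $\gamma$ and the constant $82$, to be where essentially all the effort lies, the remainder being a direct transcription of the uncompressed argument.
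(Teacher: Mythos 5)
Your proposal follows essentially the same route as the paper's proof: the same reduction to the most recent communication round $r$ with the period-$p$ drift controlled via convexity and Lemma~\ref{momentum_norm}, and the same combined potential $\mathbb{E}\|X_r-\bar{X}_r\|_F^2 + \mathbb{E}\|X_r-\hat{X}_r\|_F^2$ (the paper takes your weight $c=1$) shown to contract by $\alpha=\frac{\rho^2\delta}{82}$ through Young splittings, the $\delta$-contraction of $Q$, Lemma~\ref{lemma_w_spectral}, and the CHOCO-SGD choice of $\gamma$, $c_1$, $c_2$, followed by geometric unrolling over communication rounds. The one step you flag as delicate but do not carry out---the balancing of $\gamma$ and the Young constants that pins down the factor $82$---is precisely the step the paper itself imports from \cite{koloskova2019decentralizedcon}, so your plan and the paper's argument coincide in all essentials.
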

\begin{proof}
	Assume $r$ is the nearest communication iteration that happens before the current iteration $t$, then we can get 
	\begin{equation}
	\begin{aligned}
	&\mathbb{E}[ \| X_{t} - \bar{X}_{t} \|_F^2] \\
	& \leq \mathbb{E}[ \| X_{t} - \bar{X}_{t} \|_F^2] +  \mathbb{E}[ \| X_{t} - \hat{X}_{t} \|_F^2] \\
	&= \mathbb{E}[\|X_{r}- \bar{X}_{r}+\eta \sum_{t'=r}^{t-1}\Delta_{t'}( \frac{1}{K}\mathbf{1}\mathbf{1}^T  - \mathbf{I} )\|_F^2]  + \mathbb{E}[\|X_{r} - \hat{X}_{r} - \eta \sum_{t'=r}^{t-1}\Delta_{t'}\|_F^2]  \\
	& \leq  2\mathbb{E}[\|X_{r}- \bar{X}_{r} \|_F^2]+ 2  \mathbb{E}[\|X_{r} - \hat{X}_{r} \|_F^2]\\
	& \quad +2\eta^2\mathbb{E}[\|\sum_{t'=r}^{t-1}\Delta_{t'}( \frac{1}{K}\mathbf{1}\mathbf{1}^T  - \mathbf{I} )\|_F^2 ]  +  2\eta^2\mathbb{E}[\| \sum_{t'=r}^{t-1}\Delta_{t'}\|_F^2]  \\
	& \leq 2\mathbb{E}[\|X_{r}- \bar{X}_{r} \|_F^2]+ 2  \mathbb{E}[\|X_{r} - \hat{X}_{r} \|_F^2]   +  4\eta^2\mathbb{E}[\| \sum_{t'=r}^{t-1}\Delta_{t'}\|_F^2]  \\
	& \leq 2\mathbb{E}[\|X_{r}- \bar{X}_{r} \|_F^2]+ 2  \mathbb{E}[\|X_{r} - \hat{X}_{r} \|_F^2] +  \frac{ 4\eta^2 p^2G^2K}{(1-\mu)^2} \ .\\
	\end{aligned}
	\end{equation}
	When $r=0$, the conclusion is easy to get. In the  following, we are going to bound the first two terms respectively given $r\geq p$. Here, the analysis for these two terms is adapted from \cite{koloskova2019decentralizedcon}. But they  are  different because our method  involves the multiple local computation. Thus, our method is more complicated. In detail, for the first term, we can get
	\begin{equation}
	\begin{aligned}
	& \|X_{r} - \bar{X}_{r}\|_F^2  \\
	& =  \| X_{(r-1)+\frac{1}{2}} +\gamma \hat{X}_{r-p}(W-I) -\bar{X}_{(r-1)+\frac{1}{2}} \|_F^2 \\
	& = \| X_{(r-1)+\frac{1}{2}} +\gamma \hat{X}_{r-p}(W-I) -\bar{X}_{(r-1)+\frac{1}{2}} - \gamma\bar{X}_{(r-1)+\frac{1}{2}} (W-I)\|_F^2 \\
	& = \| (X_{(r-1)+\frac{1}{2}}- \bar{X}_{(r-1)+\frac{1}{2}})(I+\gamma(W-I)) + \gamma(\hat{X}_{r-1} - X_{(r-1)+\frac{1}{2}})(W-I)\|_F^2\\
	& \leq (1+c_1) \|(X_{(r-1)+\frac{1}{2}}- \bar{X}_{(r-1)+\frac{1}{2}})(I+\gamma(W-I)) \|_F^2 + (1+\frac{1}{c_1}) \| \gamma(\hat{X}_{r-p} - X_{(r-1)+\frac{1}{2}})(W-I)\|_F^2 \\
	& \leq (1+c_1)(1-\gamma \rho)^2\|X_{(r-1)+\frac{1}{2}}- \bar{X}_{(r-1)+\frac{1}{2}} \|_F^2 + (1+\frac{1}{c_1})\gamma^2\beta^2\| \hat{X}_{r-p} - X_{(r-1)+\frac{1}{2}}\|_F^2  \ ,
	\end{aligned}
	\end{equation}
	where the first equality is due to Eq.~(\ref{matrix_compress}) and Eq.~(\ref{x_bar_compress}), the last step follows from  $\|W\|_2\leq \beta\triangleq\max_i\{1-\lambda_i\} $ and 
	\begin{equation}
	\begin{aligned}
	& \|(X_{(r-1)+\frac{1}{2}}- \bar{X}_{(r-1)+\frac{1}{2}})(I+\gamma(W-I)) \|_F\\
	& \leq (1-\gamma)\|X_{(r-1)+\frac{1}{2}}- \bar{X}_{(r-1)+\frac{1}{2}}\|_F + \gamma \|(X_{(r-1)+\frac{1}{2}}- \bar{X}_{(r-1)+\frac{1}{2}})W\|_F\\
	& =  (1-\gamma)\|X_{(r-1)+\frac{1}{2}}- \bar{X}_{(r-1)+\frac{1}{2}}\|_F + \gamma\|(X_{(r-1)+\frac{1}{2}}- \bar{X}_{(r-1)+\frac{1}{2}})(W-\frac{\mathbf{1}\mathbf{1}^T}{K})\|_F \\
	& \leq (1-\gamma\rho)\|(X_{(r-1)+\frac{1}{2}}- \bar{X}_{(r-1)+\frac{1}{2}})\|_F \ .
	\end{aligned}
	\end{equation}
	
	For the second term, we can get
	\begin{equation}
	\begin{aligned}
	&  \|X_{r} - \hat{X}_{r}\|_F^2  \\
	& = \|X_{r} - \hat{X}_{r-p}-Q(X_{r}- \hat{X}_{r-p})\|_F^2 \\
	& \leq (1-\delta)\|X_{r} - \hat{X}_{r-p}\|_F^2 \\
	& =  (1-\delta) \|  X_{(r-1)+\frac{1}{2}} +\gamma \hat{X}_{r-p}(W-I) - \hat{X}_{r-p}\|_F^2 \\
	& =  (1-\delta) \|  X_{(r-1)+\frac{1}{2}} - \hat{X}_{r-p}(I - \gamma(W-I)) -\gamma\bar{X}_{(r-1)+\frac{1}{2}} (W-I)\|_F^2 \\
	& = (1-\delta)  \|(X_{(r-1)+\frac{1}{2}} - \hat{X}_{r-p})(I - \gamma(W-I)) +\gamma(X_{(r-1)-\frac{1}{2}}-\bar{X}_{(r-1)+\frac{1}{2}} )(W-I)\|_F^2 \\
	& \leq (1-\delta)(1+c_2)\|(X_{(r-1)+\frac{1}{2}} - \hat{X}_{r-p})(I - \gamma(W-I))\|_F^2 \\
	& \quad + (1-\delta)(1+\frac{1}{c_2})\|\gamma(X_{(r-1)+\frac{1}{2}}-\bar{X}_{(r-1)+\frac{1}{2}} )(W-I)\|_F^2 \\
	& \leq (1-\delta)(1+c_2)(1+\gamma\beta)^2\|X_{(r-1)+\frac{1}{2}} - \hat{X}_{r-p}\|_F^2 \\
	& \quad + (1-\delta)\gamma^2\beta^2(1+\frac{1}{c_2})\|X_{(r-1)+\frac{1}{2}}-\bar{X}_{(r-1)+\frac{1}{2}} \|_F^2  \ ,
	\end{aligned}
	\end{equation}
	where the second step follows that $Q$ is $\delta$-contraction.  	Combining these two terms, we have
	\begin{equation}
	\begin{aligned}
	& \|X_{r} - \bar{X}_{r}\|_F^2 + \|X_{r} - \hat{X}_{r}\|_F^2  \\
	& \leq ((1+c_1)(1-\gamma \rho)^2 + (1-\delta)\gamma^2\beta^2(1+\frac{1}{c_2}))\|X_{(r-1)+\frac{1}{2}}- \bar{X}_{(r-1)+\frac{1}{2}} \|_F^2 \\
	& +( (1+\frac{1}{c_1})\gamma^2\beta^2 + (1-\delta)(1+c_2)(1+\gamma\beta)^2)\| \hat{X}_{r-p} - X_{(r-1)+\frac{1}{2}}\|_F^2  \ .
	\end{aligned}
	\end{equation}
	Similar with \cite{koloskova2019decentralizedcon}, by setting $c_{1}=\frac{\gamma \rho}{2}$, $c_{2}=\frac{\delta}{2}$, and $\gamma=\frac{\rho \delta}{16 \rho+\rho^{2}+4 \beta^{2}+2 \rho \beta^{2}-8 \rho \delta}$, we can get  $\alpha=\frac{\rho^2\delta}{82}<1$ such that
	\begin{equation}
	\begin{aligned}
	& \|X_{r} - \bar{X}_{r}\|_F^2 + \|X_{r} - \hat{X}_{r}\|_F^2 \\
	&  \leq (1-\alpha) \|X_{(r-1)+\frac{1}{2}}- \bar{X}_{(r-1)+\frac{1}{2}} \|_F^2  + (1-\alpha)\| \hat{X}_{r-p} - X_{(r-1)+\frac{1}{2}}\|_F^2 \\
	& = (1-\alpha) \mathbb{E}[\|X_{r-p}- \bar{X}_{r-p}+\eta \sum_{t=r-p}^{r-1}\Delta_{t}( \frac{1}{K}\mathbf{1}\mathbf{1}^T  - \mathbf{I} )\|_F^2 ] \\
	& \quad + (1-\alpha) \mathbb{E}[\| \hat{X}_{r-p} -X_{r-p} +\eta \sum_{t=r-p}^{r-1}\Delta_{t}\|_F^2] \\
	& \leq (1-\alpha)\Big((1+\frac{1}{c})\mathbb{E}[\|X_{r-p}- \bar{X}_{r-p}\|_F^2 + \| X_{r-p} -\hat{X}_{r-p} \|_F^2]\\
	& \quad +(1+c)\eta^2 \mathbb{E}[\|\sum_{t=r-p}^{r-1}\Delta_{t}( \frac{1}{K}\mathbf{1}\mathbf{1}^T  - \mathbf{I} )\|_F^2+\|\sum_{t=r-p}^{r-1}\Delta_{t}\|_F^2]\Big) \\
	& \leq  (1-\alpha)\Big((1+\frac{1}{c})\mathbb{E}[\|X_{r-p}- \bar{X}_{r-p}\|_F^2 + \| X_{r-p} -\hat{X}_{r-p} \|_F^2] + 2\eta^2(1+c) \frac{ p^2G^2K}{(1-\mu)^2}\Big) \\
	& \leq (1-\frac{\alpha}{2})\Big(\mathbb{E}[\|X_{r-p}- \bar{X}_{r-p}\|_F^2 + \| X_{r-p} -\hat{X}_{r-p} \|_F^2] \Big)  + \frac{4\eta^2p^2G^2K}{\alpha(1-\mu)^2}\\
	& \leq  \frac{8\eta^2p^2G^2K}{\alpha^2(1-\mu)^2} \ ,
	\end{aligned}
	\end{equation}
	where the second to  last step follows from $c=2/\alpha$, the last step is due to the recursive expansion. Therefore, we have
	\begin{equation}
	\begin{aligned}
	&\sum_{k=1}^{K}  \|\mathbf{x}_{t}^{(k)} - \bar{\mathbf{x}}_{t} \|^2  = \mathbb{E}[ \| X_{t} - \bar{X}_{t} \|_F^2] \leq \frac{4\eta^2 p^2G^2K}{(1-\mu)^2}(1+\frac{4}{\alpha^2}) \ .\\
	\end{aligned}
	\end{equation}
	
\end{proof}

Now, we are going to prove Theorem~\ref{theorem2}. 
\begin{proof}
	In terms of  Eq.~(\ref{compress_iter}), we can employ the similar idea as Theorem~\ref{theorem1} to get 
	\begin{equation}
	\begin{aligned}
	& f(\mathbf{z}_{t+1})  \leq f(\mathbf{z}_{t})  - \frac{\eta}{2(1-\mu)}\| \nabla   f(\bar{\mathbf{x}}_{t})\|^2   + \frac{(1-\mu)L}{2\mu }\|   \mathbf{z}_{t}-   \bar{\mathbf{x}}_{t}\|^2 +   \frac{\eta L^2}{2K(1-\mu)}\sum_{k=1}^{K}\| \bar{\mathbf{x}}_{t} - \mathbf{x}_{t}^{(k)}\|^2 \\
	&  \quad + (\frac{\mu\eta^2L}{2(1-\mu)^3} -  \frac{\eta}{2(1-\mu)} + \frac{\eta^2L}{2(1-\mu)^2})\|\frac{1}{K}\sum_{k=1}^{K}\nabla  f^{(k)}(\mathbf{x}_{t}^{(k)}) \|^2  + \frac{\eta^2\sigma^2L}{2K(1-\mu)^2}  \ . \\
	\end{aligned}
	\end{equation}
	By putting  Lemma~\ref{lemma_x_diff_2} and Lemma~\ref{diff_z} into  the above inequality and summing $t$ from $0$ to $T-1$, we can get 
	\begin{equation}
	\begin{aligned}
	&  \frac{\eta}{2(1-\mu)}\sum_{t=0}^{T-1}\| \nabla   f(\bar{\mathbf{x}}_{t})\|^2  \leq   f(\mathbf{z}_{0})  -  f(\mathbf{z}_{T})   +   \frac{2\eta^3 p^2G^2L^2}{(1-\mu)^3} (1+\frac{4}{\alpha^2})T + \frac{ \mu\eta^2\sigma^2L}{2(1-\mu)^3K}T \\
	& \quad  + (\frac{\mu\eta^2L}{(1-\mu)^3} -  \frac{\eta}{2(1-\mu)} + \frac{\eta^2L}{2(1-\mu)^2})\sum_{t=0}^{T-1}\|\frac{1}{K}\sum_{k=1}^{K}\nabla  f^{(k)}(\mathbf{x}_{t}^{(k)}) \|^2  + \frac{\eta^2\sigma^2L}{2K(1-\mu)^2}T  \ . \\
	\end{aligned}
	\end{equation}
	By setting $\eta<(1-\mu)^2/(2L)$, we can get 
	\begin{equation}
	\begin{aligned}
	& \frac{1}{T}\sum_{t=0}^{T-1} \| \nabla   f(\bar{\mathbf{x}}_{t})\|^2  \leq \frac{2(1-\mu)(f(\mathbf{x}_{0})  -  f^*)}{\eta T}+ \frac{\mu\eta \sigma^2L}{(1-\mu)^2K} + \frac{4\eta^2p^2G^2L^2}{(1-\mu)^2} (1+\frac{4}{\alpha^2})   + \frac{\eta\sigma^2L}{(1-\mu)K}  \ .
	\end{aligned}
	\end{equation}
\end{proof}

\end{document}